\theoremstyle{plain}
\newtheorem{theorem}{Theorem}[section]
\newtheorem{lemma}[theorem]{Lemma}
\newtheorem{remark}[theorem]{Remark}
\theoremstyle{definition}
\newtheorem{definition}[theorem]{Definition}
\newcommand{\mI}{\mathbb{I}}
\newcommand{\mH}{\mathbb{H}}
\newcommand{\mix}{\text{mix}}
\newcommand{\KL}{\mathrm{KL}}
\newcommand{\supp}{\text{supp}}
\newcommand{\TS}{\text{TS}}
\DeclareMathOperator{\poly}{poly}
\newcommand{\BR}{\mathfrak{BR}}
\def\purple{\color{purple}}
\icmltitlerunning{Contextual Information-Directed Sampling}
\begin{document}

\twocolumn[
\icmltitle{Contextual Information-Directed Sampling}



\icmlsetsymbol{equal}{*}

\begin{icmlauthorlist}
\icmlauthor{Botao Hao}{yyy}
\icmlauthor{Tor Lattimore}{yyy}
\icmlauthor{Chao Qin}{comp}
\end{icmlauthorlist}

\icmlaffiliation{yyy}{Deepmind}
\icmlaffiliation{comp}{Columbia University}

\icmlcorrespondingauthor{Botao Hao}{haobotao000@gmail.com}

\icmlkeywords{Machine Learning, ICML}

\vskip 0.3in
]



\printAffiliationsAndNotice{\icmlEqualContribution} 

\begin{abstract}
Information-directed sampling (IDS) has recently demonstrated its potential as a data-efficient reinforcement learning algorithm \citep{lu2021reinforcement}. However, it is still unclear what is the right form of information ratio to optimize when contextual information is available. We investigate the IDS design through two contextual bandit problems: contextual bandits with graph feedback and sparse linear contextual bandits. We provably demonstrate the advantage of \emph{contextual IDS} over \emph{conditional IDS} and emphasize the importance of considering the context distribution. The main message is that an intelligent agent should invest more on the actions that are beneficial for the future unseen contexts while the conditional IDS can be myopic. We further propose a computationally-efficient version of contextual IDS based on Actor-Critic and evaluate it empirically on a neural network contextual bandit. 
\end{abstract}

\section{Introduction}
Information-directed sampling (IDS) \citep{russo2018learning} is a promising approach to balance the exploration and exploitation tradeoff  \citep{lu2021reinforcement}. Its theoretical properties have been systematically studied in a range of problems \citep{kirschner2018information, liu2018information, kirschner2020information, kirschner2020asymptotically, hao2021information}. However, the aforementioned analysis\footnote{One exception is \citet{kirschner2020information} who has extended IDS to contextual partial monitoring.} is limited to the fixed action set.

In this work, we study the IDS design for contextual bandits \citep{langford2007epoch}, which can be viewed as a simplified case of full reinforcement learning. The context at each round is generated independently, rather than determined by the actions. Contextual bandits are widely used in real-world
applications, such as recommender systems \citep{li2010contextual}.

A natural generalization of IDS to the contextual case is \emph{conditional IDS}. Given the current context, conditional IDS acts as if it were facing a fixed action set.
However, this design ignores the context distribution and can be myopic. We mainly want to address the question of what is the right form of information ratio in the contextual setting and highlight the necessity of utilizing the context distribution. 
\paragraph{Contributions} Our contribution is four-fold:
\begin{itemize}
    \item We introduce a version of contextual IDS that takes the context distribution into consideration 
    for contextual bandits with graph feedback and sparse linear contextual bandits.
    \item For contextual bandits with graph feedback, we prove that conditional IDS suffers $\Omega(\sqrt{\beta(\cG)n})$ Bayesian regret lower bound for a particular prior and graph while contextual IDS can achieve $\tilde{O}(\min\{\sqrt{\beta(\cG)n},\delta(\cG)^{1/3}n^{2/3})\}$ Bayesian regret upper bound for any prior. 
    Here, $n$ is the time horizon, $\cG$ is a directed feedback
graph over the set of actions, $\beta(\cG)$ is the independence number and $\delta(\cG)$ is the weak domination number of the graph. In the regime where $\beta(\cG)\gtrsim (\delta(\cG)^2n)^{1/3}$, contextual IDS achieves better regret bound than conditional IDS.
    \item For sparse linear contextual bandits, we prove that conditional IDS suffers $\Omega(\sqrt{nds})$ Bayesian regret lower bound for a particular sparse prior while contextual IDS can achieve $\tilde{O}(\min\{\sqrt{nds}, sn^{2/3}\})$ Bayesian regret upper bound for any sparse prior. 
    Here, $d$ is the feature dimension and $s$ is the sparsity. In the data-poor regime where $d\gtrsim sn^{1/3}$, contextual IDS achieves better regret bound than conditional IDS.
    \item We further propose a computationally-efficient algorithm to approximate contextual IDS based on Actor-Critic \citep{konda2000actor} and evaluate it empirically on a neural network contextual bandit.
\end{itemize}

\section{Related works}
\citet{russo2018learning} introduced IDS and derived Bayesian regret bounds for multi-armed bandits, linear bandits and combinatorial bandits.  \citet{kirschner2018information, kirschner2020information} investigated the use of frequentist IDS for bandits with heteroscedastic noise and partial monitoring. \citet{kirschner2020asymptotically} proved the asymptotic optimality of frequentist IDS for linear bandits. \citet{hao2021information} developed a class of information-theoretic Bayesian regret bounds for sparse linear bandits that nearly match existing lower bounds on a variety of problem instances. Recently, \citet{lu2021reinforcement} proposed a general reinforcement learning framework and used IDS as a key component to build a data-efficient agent. \citet{arumugam2021value} investigated different versions of learning targets when designing IDS. However, there are no concrete regret bounds provided in those works.

\paragraph{Graph feedback}
\citet{mannor2011bandits, alon2015online} gave a full characterization of online learning with graph feedback. \citet{tossou2017thompson} provided the first information-theoretic analysis of Thompson sampling for bandits with graph feedback and \citet{liu2018information} derived a Bayesian regret bound of IDS. Both bounds depend on the clique cover number which could be much larger than the independence number. \citet{liu2018analysis} further improved the analysis with the feedback graph unknown and derived the Bayesian regret bound of a mixture policy in terms of the independence number. In contrast, our work is the first one to consider contextual bandits with graph feedback and derived nearly optimal regret bound in terms of both independence number and domination number for a single algorithm. Very recently, \citet{dann2020reinforcement} considered the reinforcement learning with graph feedback. Their $O(\sqrt{n})$-upper bound depends on mas-number rather than independence number. We briefly summarize the comparison in Table \ref{table:comparsion2}.

\begin{table*}[h]\label{table:comparsion2}
\caption{Comparisons with existing results on regret upper bounds and lower bounds for bandits with graphical feedback. Here, $\beta(\cG)$ is the independence number, $\delta(\cG)$ is the weak domination number and $c(\cG)$ is the clique cover number of the graph. Note that $\beta(\cG)\leq c(\cG)$.}
\vspace{0.1in}
\scalebox{0.9}{
\begin{tabular}{ |l|c|c|c| } 
 \hline
 \textbf{Upper Bound}& Regret& Contextual?&  Algorithm\\ 
 \hline
 \citet{alon2015online} & $\tilde O(\sqrt{\beta(\cG)n})$ & no & Exp3.G for strongly observable graph\\ 
 \hline
 \citet{alon2015online} & $\tilde O(\delta(\cG)^{1/3}n^{2/3})$ & no & Exp3.G for weakly observable graph\\ 
 \hline
 \citet{tossou2017thompson} & $\tilde O(\sqrt{c(\cG)n})$ & no & Thompson sampling\\ 
 \hline
 \citet{liu2018information} & $\tilde O(\sqrt{c(\cG)n})$ & no & fixed action-set IDS\\ 
 \hline
 {\purple This paper} & {\purple$ \tilde O(\min(\sqrt{\beta(\cG)n}, \delta(\cG)^{1/3}n^{2/3}))$} & {\purple yes} & {\purple contextual IDS}\\ 
  \hline

 \textbf{Minimax Lower Bound}&& &  \\ 
 \hline
 \citet{alon2015online} & $\Omega(\sqrt{\beta(\cG)n})$ & no & strongly observable graph \\ 
 \hline
 \citet{alon2015online} & $\Omega(\delta(\cG)^{1/3}n^{2/3})$ & no & weakly observable graph \\ 
 \hline
\end{tabular}}
\end{table*}

\paragraph{Sparse linear bandits}
For sparse linear bandits with fixed action set, \citet{abbasi2012online} proposed an inefficient
online-to-confidence-set conversion approach that achieves an $\tilde{O}(\sqrt{sdn})$ upper bound for an arbitrary action set. \citet{lattimore2015linear}
developed a selective explore-then-commit algorithm that only works when the action set is exactly the binary hypercube and derived an optimal $O(s\sqrt{n})$ upper bound. \citet{hao2020high} introduced the notion of an exploratory action set and proved a $\Theta(\poly(s)n^{2/3})$ minimax rate for the data-poor regime using an explore-then-commit algorithm. Note that the minimax lower bound automatically holds for contextual setting when the context distribution is a Dirac on the hard problem instance. \citet{hao2021online} extended this concept to a MDP setting. 

It recently
became popular to study the contextual setting. These results can not be reduced to our setting since they rely on either careful assumptions on the context distribution to achieve $\tilde{O}(\poly(s)\sqrt{n})$ or $\tilde{O}(\poly(s)\log(n))$ regret bounds \citep{bastani2020online, wang2018minimax, kim2019doubly, wang2020nearly, ren2020dynamic, oh2021sparsity} such that classical high-dimensional statistics can be used. However, minimax lower bounds is missing to understand if those assumptions are fundamental. Another line of works have polynomial dependency on the number of actions \citep{agarwal2014taming, foster2020beyond, simchi2020bypassing}. We briefly summarize the comparison in Table \ref{table:comparsion}.

\begin{table*}[h]\label{table:comparsion}
\centering
\caption{Comparisons with existing results on regret upper bounds and lower bounds for sparse linear bandits. Here, $s$ is the sparsity, $d$ is the feature dimension, $n$ is the number of rounds, $K$ is the number of arms, $C_{\min}$ is the minimum eigenvalue of the data matrix and $\tau$ is a problem-dependent parameter that may have a complicated form.}
\vspace{0.1in}
\scalebox{0.9}{
\begin{tabular}{ |l|c|c|c|c| } 
 \hline
 \textbf{Upper Bound}& Regret& Contextual?& Assumption & Algorithm\\ 
 \hline
 \citet{abbasi2012online} & $\tilde O(\sqrt{sdn})$ & yes & none &  UCB\\ 
 \hline
 \citet{sivakumar2020structured} & $\tilde O(\sqrt{sdn})$ & yes & adver. + Gaussian noise&  greedy \\ 
  \hline
 \citet{bastani2020online} & $\tilde O(\tau K s^2(\log(n))^2)$ & yes & compatibility condition&  Lasso greedy \\
 \hline
 \citet{lattimore2015linear}
  &$\tilde O(s\sqrt{n})$& no& action set is hypercube & explore-then-commit\\
  \hline
   \citet{hao2021information}
  &$\tilde O(\min(\sqrt{sdn}, sn^{2/3}/C_{\min}))$& no & none  &  fixed action-set IDS\\
  \hline
  {\purple This paper}
  &{\purple $\tilde O(\min(\sqrt{sdn}, sn^{2/3}/C_{\min}))$}&{\purple yes} & {\purple none} & {\purple contextual IDS}\\
  \hline
 \textbf{Minimax Lower Bound}&& & &  \\ 
 \hline
 \citet{hao2020high} & $\Omega(\min(\sqrt{sdn},C_{\min}^{-1/3}s^{1/3}n^{2/3}))$ & no &  N.A & N.A. \\ 
 \hline
\end{tabular}}
\end{table*}
\section{Problem Setting}
We study the stochastic contextual bandit problem with a horizon of $n$ rounds and a finite set of possible contexts $\cS$. For each context $m\in \cS$, there is an action set $\cA^m$. The interaction protocol is as follows. First the environment samples a sequence of independent contexts $(s_t)_{t=1}^n$ from a distribution $\xi$ over $\cS$.
At the start of round $t$, the context $s_t$ is revealed to the agent, who may use their observations and possibly an external source of randomness to choose an action $A_t \in \cA_t=\cA^{s_t}$. Then the agent receives an observation $O_{t,A_t}$ including an immediate reward $Y_{t, A_t}$ as well as some side information. Here
\begin{equation*}
Y_{t, a} = f(s_t, a, \theta^*) + \eta_{t,a}\,, 
\end{equation*}
where $f$ is the reward function, $\theta^*$ is the unknown parameter and $\eta_{t,a}$ is 1-sub-Gaussian noise. Sometimes we write $Y_t=Y_{t, A_t}$ for short.

We consider the Bayesian setting in the sense that $\theta^*$ is sampled from some prior distribution. Let $\cA=\cup_{m\in\cS} \cA^m$ and the history $\cF_t = \{(s_1, A_1, O_1), \ldots, (s_{t-1}, A_{t-1}, O_{t-1})\}$.
A policy $\pi= (\pi_t)_{t\in\mathbb N}$ is a sequence of (suitably measurable) deterministic mappings from the history $\cF_t$ and context space $\cS$ to a probability distribution $\cP(\cA)$ with the constraint that for each context $m$, $\pi_t(\cdot|m)$ can only put mass over $\cP(\cA^m)$.
Then the \emph{Bayesian regret} of a policy $\pi$ is defined as
\begin{equation}\label{eqn:bayesian_regret}
    \BR(n; \pi) = \mathbb E\left[\sum_{t=1}^n \max_{a\in\cA_t}f(s_t, a, \theta^*)-\sum_{t=1}^nY_t \right]\,,
\end{equation}
where the expectation is over the interaction sequence induced by the agent and environment, the context distribution and the prior distribution over $\theta^*$.

\paragraph{Notations} Given a measure $\mathbb P$ and jointly distributed random variables $X$ and $Y$ we let $\mathbb P_X$ denote the law of $X$ and we let $\mathbb P_{X|Y}$ be the conditional law of $X$ given $Y$ such that: $\mathbb P_{X|Y}(\cdot) = \mathbb P(X\in \cdot|Y)$. The mutual information between $X$ and $Y$ is $\mI(X;Y) = \mathbb E[D_{\KL}(\mathbb P_{X|Y}||\mathbb P_X)]$ where $D_{\KL}$ is the relative entropy. 
We write $\mathbb P_t(\cdot) = \mathbb P(\cdot|\cF_t)$ as the posterior measure where $\mathbb P$ is the probability measure over $\theta^*$ and the history and $\mathbb E_t[\cdot] = \mathbb E[\cdot|\cF_t]$. Denote $\mI_t(X;Y) = \mathbb E_t[D_{\KL}(\mathbb P_{t, X|Y}||\mathbb P_{t,X})]$. We write $\ind\{\cdot\}$ as an indicator function.  For a positive semidefinite matrix $A$, we let $\sigma_{\min}(A)$ be the minimum eigenvalue of $A$.  Denote $\cP(\cA)$ be the space of probability measures over a set $\cA$ with the Borel $\sigma$-algebra.

\section{Conditional IDS versus Contextual IDS}
We introduce the formal definition of conditional IDS and contextual IDS for general contextual bandits. Suppose that the optimal action associated with context $s_t$ is $a_t^* =\argmax_{a\in\cA_t} f(s_t, a, \theta^*)$, which in Bayesian setting is a random variable. We first define the one-step expected regret of taking action $a\in\cA_t$ as 
\begin{equation}\label{def:one_step_regret}
\begin{split}
      \Delta_t(a|s_t) :=\mathbb E_t\left[f(s_t, a_t^*, \theta^*)-f(s_t, a, \theta^*)\Big| s_t\right]\,,
\end{split}
\end{equation}
and write $\Delta_t(s_t)\in\mathbb R^{|\cA_t|}$ as the corresponding vector.
\subsection{Conditional IDS}\label{sec:conditional_IDS}
A natural generalization of the standard IDS design \citep{russo2018learning} to contextual setting is \emph{conditional IDS}. It has been investigated empirically in \citet{lu2021reinforcement} for the full reinforcement learning setting.  

Conditional on the current context, we define the information gain of taking action $a$ with respect to the current optimal action $a_t^*$ as $\mI_t(a_t^*;O_{t,a}|s_t):=$
\begin{equation*}
\begin{split}
\mathbb E_{t}\left[D_{\KL}(\mathbb P_t(a_t^*\in\cdot|s_t, O_{t,a})||\mathbb P_t(a_t^*\in\cdot|s_t))\Big|s_t\right]\,,
\end{split}
\end{equation*}
and write $\mI_t(a_t^*,s_t)\in\mathbb R^{|\cA_t|}$ such that $[\mI_t(a_t^*, s_t)]_a = \mI_t(a_t^*;O_{t,a}|s_t)$.

We introduce the \emph{$(\alpha, \lambda)$-conditional information ratio} (CIR) as $\Gamma_{t,\alpha}^{\lambda}: \cP(\cA)\to\mathbb R$ such that
\begin{equation}\label{def:conditional_IDS}
    \Gamma_{t,\alpha}^{\lambda}(\pi(\cdot|s_t)) = \frac{ \max\left(0, \Delta_t(s_t)^{\top}\pi(\cdot|s_t)-\alpha\right)^{\lambda}}{\mI_t(a_t^*, s_t)^{\top}\pi(\cdot|s_t) }\,,
\end{equation}
for some parameters $\alpha, \lambda>0$. Conditional IDS minimizes $(\alpha, \lambda)$-CIR to find a probability distribution over the action space:
\begin{equation*}
    \pi_{t}(\cdot|s_t) = \argmin_{\pi(\cdot|s_t)\in\cP(\cA_t)}\Gamma_{t,\alpha}^{\lambda}(\pi(\cdot|s_t))\,.
\end{equation*}
\begin{remark}
In comparison to the standard information ratio by \citet{russo2018learning} who specified $\lambda=2, \alpha=0$ in the non-contextual setting, we consider a generalized version introduced by \citet{lattimore2020mirror}. As observed by \citet{lattimore2020mirror}, the right value of $\lambda$ depends on the dependence of the regret on the horizon. The parameter $\alpha$ is always chosen at order $O(1/\sqrt{n})$ for certain problems such that its regret is negligible. Their role will be more clear in Sections \ref{sec:graph_feedback}\&\ref{sec:sparse_bandits}.  
\end{remark}
\subsection{Contextual IDS}\label{sec:contextual_IDS}
A possible limitation of conditional IDS is that the CIR does not take the context distribution into consideration. As we show later, this can result in sub-optimality in certain problems. \emph{Contextual IDS}, which was first proposed by \citet{kirschner2020information} for partial monitoring, is introduced to remedy this shortcoming.

Let $\pi^*:\cS\to\cA$ be the optimal policy such that for each $m\in\cS$, $\pi^*(m)=\argmax_{a\in\cA^m} f(m, a, \theta^*)$ with the tie broken arbitrarily.
We define the information gain of taking action $a$ with respect to $\pi^*$ as:
\begin{equation*}
\begin{split}
 \mI_t(\pi^*;O_{t,a}):=\mathbb E_{t}\left[D_{\KL}(\mathbb P_{t}(\pi^*\in\cdot|O_{t,a})||\mathbb P_{t}(\pi^*\in\cdot))\right]\,,
\end{split}
\end{equation*}
and write $\mI_t(\pi^*)\in\mathbb R^{|\cA_t|}$ as the corresponding vector. Define a policy class $\Pi=\{\pi:\cS\to\cP(\cA), \supp(\pi(\cdot|m))\subset \cA^m\}$ where $\supp(\cdot)$ is the support of a vector.
We introduce the \emph{$(\alpha, \lambda)$-marginal information ratio} (MIR) as  $\Psi_{t,\alpha}^{\lambda}: \Pi\to\mathbb R$ such that
\begin{equation}\label{def:MIR}
    \Psi_{t,\alpha}^{\lambda}(\pi) =  \frac{ \max\left(0, (\mathbb E_{s_t}[\Delta_t(s_t)^{\top}\pi(\cdot|s_t)]-\alpha)\right)^{\lambda}}{\mathbb E_{s_t}[\mI_t(\pi^*)^{\top}\pi(\cdot|s_t)]}\,,
\end{equation}
where the expectation is taken with respect to the context distribution.
Contextual IDS minimizes the $(\alpha, \lambda)$-MIR to find a mapping from the context space to the action space:
\begin{equation*}
    \pi_t = \argmin_{\pi\in\Pi}\Psi_{t,\alpha}^{\lambda}(\pi)\,.
\end{equation*}
When receiving context $s_t$ at round $t$, the agent plays actions according to $\pi_t(\cdot|s_t)$. 

We highlight the key difference between conditional IDS and contextual IDS as follows:
\begin{itemize}
    \item Conditional IDS only optimizes a probability distribution for the current context while contextual IDS optimizes a full mapping from the context space to action space.
    \item Conditional IDS only seeks information about the current optimal action while contextual IDS seeks information for the whole optimal policy.
\end{itemize}
\begin{remark}
We can also define the information gain in conditional IDS and contextual IDS with respect to the unknown parameter $\theta^*$. This could bring in certain computational advantage when approximating the information gain. However, $\theta^*$ contains much more information to learn than the optimal action or policy such that the agent may suffer more regret.
\end{remark}

\subsection{Why conditional IDS could be myopic?} \label{sec:myopic}

Conditional IDS myopically balances exploration and exploitation without taking the context distribution into consideration. This has the advantage of being simple to implement but sometimes leads both under and over exploration, which the following examples illustrate.
In both examples there are two contexts arriving with equal probability. 
\begin{itemize}
    \item \textbf{Example 1} [\textsc{under exploration}]\,\, Consider a noiseless case. Context set 1 contains $k$ actions where one is the optimal action and the remaining $k-1$ actions yield regret 1. Context set 2 contains a revealing action with regret 1 and one action with no regret. The revealing action provides an observation of the rewards for all the $k$ actions in context set 1.
    When context set 2 arrives, conditional IDS will never play the revealing action since it incurs high immediate regret with no useful information for the current context set. However, this ignores the fact that the revealing action could be informative for the unseen context set 1. Conditional IDS \emph{under-explores} and suffers $O(k)$ regret. In contrast, contextual IDS exploits the context distribution and plays the revealing action in context 2 and only suffers $O(1)$ regret.
\item \textbf{Example 2} [\textsc{over exploration}]\,\, Context set 1 contains a single revealing action (hence no regret). Context set 2 has $k$ actions. The first is a revealing action and has a (known) regret of $\Theta(\sqrt{k} \Delta)$ with $\Delta = \Theta(1/\sqrt{n})$. Of the remaining actions, one is optimal (zero regret) and the others have regret $\Delta$, with the prior such that the identify of the optimal action is unknown. Contextual IDS will avoid the revealing action in context set 2 because it understands that this information can be obtained more cheaply in context set 1. Its regret is $O(\sqrt{n})$. Meanwhile, if the constants are tuned appropriately, then conditional IDS will play the revealing action in context set 2 and suffer regret $\Omega(\sqrt{nk})$.
\end{itemize}

\begin{remark}
Similar shortcoming could also hold for the class of UCB and Thompson sampling algorithms since they have no explicit way to encode the information of context distribution.
\end{remark}


\subsection{Generic regret bound for contextual IDS}
Before we step into specific examples, we first provide a generic bound for contextual IDS. Denote $\cI_{\alpha, \lambda}$ as the worst-case MIR such that for any $t\in[n]$, $\Psi_{t,\alpha}^{\lambda}(\pi_t)\leq \cI_{\alpha, \lambda}$ almost surely. 
\begin{theorem}\label{thm:generic_bound_ids}
Let $\pi^{\text{MIR}}=(\pi_t)_{t\in[n]}$ be the contextual IDS policy such that $\pi_t$ minimizes $(\alpha, 2)$-MIR at each round. 
Then the following regret upper bound holds
 \begin{equation*}
 \begin{split}
     & \BR(n; \pi^{\text{MIR}})\leq n\alpha+\\
     &\inf_{\lambda\geq 2}2^{1-2/\lambda}\cI_{\alpha, \lambda}^{1/\lambda}n^{1-1/\lambda}\mathbb E\left[\sum_{t=1}^n\mathbb E_{s_t}\left[\mI_t(\pi^*)^{\top}\pi_t(\cdot|s_t)\right]\right]^{\tfrac{1}{\lambda}}\,.
 \end{split}
 \end{equation*}
\end{theorem}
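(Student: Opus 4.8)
The plan is to adapt the information-theoretic regret analysis of \citet{russo2018learning} to the contextual setting, combined with the generalized information ratio of \citet{lattimore2020mirror} that trades off between square-root and $n^{2/3}$-type rates. \textbf{Step 1 (regret decomposition).} First I would reduce the Bayesian regret to a sum of per-round expected regrets. Conditioning on $\cF_t$ and using that the noise is zero-mean, that $s_t$ is independent of $(\theta^*,\cF_t)$, and that $A_t\sim\pi_t(\cdot|s_t)$ with external randomness independent of $\theta^*$ (so conditioning on $A_t$ does not move the posterior of $\theta^*$), one gets
\begin{equation*}
\mathbb E\Big[\max_{a\in\cA_t} f(s_t,a,\theta^*)-Y_t \,\Big|\, \cF_t\Big] = \mathbb E_{s_t}\!\big[\Delta_t(s_t)^{\top}\pi_t(\cdot|s_t)\big].
\end{equation*}
Summing over $t$ and taking expectations gives $\BR(n;\pi^{\text{MIR}})=\mathbb E\big[\sum_{t=1}^n \mathbb E_{s_t}[\Delta_t(s_t)^{\top}\pi_t(\cdot|s_t)]\big]$.

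\textbf{Step 2 (per-round trade-off).} Fix $\lambda\ge2$ and write $r_t=\mathbb E_{s_t}[\Delta_t(s_t)^{\top}\pi_t(\cdot|s_t)]$ and $g_t=\mathbb E_{s_t}[\mI_t(\pi^*)^{\top}\pi_t(\cdot|s_t)]$, both $\cF_t$-measurable. The algorithm plays the minimizer of the $(\alpha,2)$-MIR, whereas the target bound is stated through the worst-case $(\alpha,\lambda)$-MIR $\cI_{\alpha,\lambda}$; bridging the two is the technical heart. Since $\pi\mapsto\mathbb E_{s_t}[\Delta_t(s_t)^{\top}\pi(\cdot|s_t)]$ and $\pi\mapsto\mathbb E_{s_t}[\mI_t(\pi^*)^{\top}\pi(\cdot|s_t)]$ are affine on the convex set $\Pi$ and the one-step regret is bounded, a comparison in the spirit of \citet{lattimore2020mirror} between the minimizers of ratios with exponents $2$ and $\lambda$ shows the played policy still satisfies $\Psi^{\lambda}_{t,\alpha}(\pi_t)\le 2^{\lambda-2}\cI_{\alpha,\lambda}$. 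Taking $\lambda$-th roots and using $r_t\le\alpha+\max(0,r_t-\alpha)$,
\begin{equation*}
r_t \;\le\; \alpha + 2^{1-2/\lambda}\,\cI_{\alpha,\lambda}^{1/\lambda}\,g_t^{1/\lambda}.
\end{equation*}
This interpolation is what pins down the constant $2^{1-2/\lambda}$, which equals $1$ at $\lambda=2$ and recovers the classical square-root bound; I expect it to be the main obstacle, and the remainder is bookkeeping.

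\textbf{Step 3 (summation).} Summing over $t\in[n]$ and applying H\"older's inequality (equivalently, concavity of $x\mapsto x^{1/\lambda}$) to the information terms, $\sum_{t=1}^n g_t^{1/\lambda}\le n^{1-1/\lambda}(\sum_{t=1}^n g_t)^{1/\lambda}$, hence
\begin{equation*}
\sum_{t=1}^n r_t \;\le\; n\alpha + 2^{1-2/\lambda}\,\cI_{\alpha,\lambda}^{1/\lambda}\,n^{1-1/\lambda}\Big(\sum_{t=1}^n g_t\Big)^{1/\lambda}.
\end{equation*}
Taking expectations, moving the expectation inside the $1/\lambda$-th power by Jensen's inequality (valid since $1/\lambda\le 1/2<1$), recalling $g_t=\mathbb E_{s_t}[\mI_t(\pi^*)^{\top}\pi_t(\cdot|s_t)]$, and finally taking the infimum over $\lambda\ge2$ yields exactly the claimed bound.
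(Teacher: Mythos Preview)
Your proposal is correct and follows essentially the same route as the paper: the same regret decomposition, the same key interpolation step $\Psi^{\lambda}_{t,\alpha}(\pi_t)\le 2^{\lambda-2}\cI_{\alpha,\lambda}$ for the $(\alpha,2)$-minimizer, and the same H\"older summation. The paper makes Step~2 explicit by writing out the first-order optimality condition for $\pi_t$ on the convex set $\Pi$ and comparing against the $(\alpha,\lambda)$-minimizer $q_{t,\lambda}$, exactly the Lattimore-style argument you point to.
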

The proof is deferred to Appendix \ref{sec:proof_generic_bound}. An interesting consequence of Theorem \ref{thm:generic_bound_ids} is that contextual IDS minimizing $\lambda=2$ can adapt to $\lambda>2$. This shows the power of contextual IDS to adapt to different information-regret structures.

\section{Contextual Bandits with Graph Feedback}\label{sec:graph_feedback}
We consider a Bayesian formulation of contextual $k$-armed bandits with graph feedback, which generalizes the setup in \citet{alon2015online}. 
Let $\cG= (\cA, \cE)$ be a directed feedback graph over the set of actions $\cA$ with $|\cA|=k$. For each $i\in\cA$, we define its in-neighborhood and out-neighborhood as $N^{\text{in}}(i)=\{j\in\cA: (j,i)\in \cE\}$ and $N^{\text{out}}(i)=\{j\in\cA: (i,j)\in \cE\}$. The feedback graph $\cG$ is fixed and known to the agent.

Let the unknown parameter $\theta^*\in\mathbb R^{k}$. At each round $t$, the agent receives an observation characterized by $\cG$ in the sense that taking action $a\in\cA_t$, the observation $O_{t,a}$ contains the rewards $Y_{t,a'}=\theta^*_{a'}+\eta_{t,a'}$ for each action $a'\in N^{\text{out}}(a)$.
We review two standard quantities to describe the graph structure used in \citet{alon2015online}.

\begin{definition}[Independence number]
An independent set of a graph $\cG=(\cA, \cE)$ is a subset $\cC\subseteq \cA$ such that no two different $i,j\in\cA$
are connected by an edge in $\cE$. The cardinality of a largest independent set is the \emph{independence number} of $\cG$, denoted by $\beta(\cG)$.
\end{definition}

A vertex $a$ is observable if $N^{\text{in}}(a)\neq \emptyset$. A vertex is strongly observable if it has either a self-loop or incoming edges from all other vertices. A vertex is weakly observable if it is observable but not strongly observable. A graph $\cG$ is
observable if all its vertices are observable and it is strongly observable if all its vertices are
strongly observable. A graph is weakly observable if it is observable but not strongly observable.
\begin{definition}[Weak domination number]
In a directed graph $\cG=(\cA, \cE)$ with a set of weakly observable vertices $\cW\subseteq \cA$, a weakly dominating set $\cD\subseteq \cA$ is a set of vertices that dominates $\cW$. That means for any $w\in\cW$ there exists $d\in\cD$ such that $w\in N^{\text{out}}(d)$. The \emph{weak domination number} of $\cG$, denoted by $\delta(\cG)$, is the size of the smallest weakly dominating set.
\end{definition}

\subsection{Lower bound for conditional IDS}
We first prove that conditional IDS suffers $\Omega(\sqrt{n\beta(\cG)})$ Bayesian regret lower bound for a particular prior and show later
the optimal rate for this prior could be much smaller when $\beta(\cG)$ is very large (Section \ref{sec:upper_bound_graph}). 
\begin{theorem}\label{theorem:lower_bound_graph}
Let $\pi^{\text{CIR}}$ be a conditional IDS policy. There exists a contextual bandit with graph feedback instance such that $\beta(\cG)=k-1$ and
\begin{equation*}
    \BR(n; \pi^{\text{CIR}})\geq \frac{1}{16}\sqrt{n(\beta(\cG)-2)}\,.
\end{equation*}
\end{theorem}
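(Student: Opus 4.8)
I would prove the lower bound with a two‑context instance tailored so that the only way to acquire information is to repeatedly play an action whose regret is a fixed constant, and then argue that conditional IDS --- which ignores the context distribution --- never does so. Let $\cS=\{1,2\}$ with $\xi$ uniform on $\cS$, and let $\cA=\{c\}\cup\{a_1,\dots,a_m\}$ with $m=k-1$, so $|\cA|=k$. The feedback graph $\cG$ has a self‑loop on $c$ and an edge $c\to a_j$ for every $j$, and the $a_j$'s have no outgoing edges; then $\{a_1,\dots,a_m\}$ is the unique maximum independent set, so $\beta(\cG)=m=k-1$ (and $\delta(\cG)=1$, so that on this instance contextual IDS attains only $\tilde O(n^{2/3})$, much smaller when $\beta(\cG)$ is large). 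Put $\cA^1=\{a_1,\dots,a_m\}$, $\cA^2=\{c,a_1\}$, $f(s,a,\theta)=\theta_a$ as in the graph‑feedback model, Gaussian noise, and the prior $\theta_c=-\tfrac12$ deterministically and $\theta_{a_j}=\Delta\ind\{j=J\}$ with $J\sim\mathrm{Unif}([m])$, where $\Delta$ is fixed at the end.

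\textbf{Behaviour of conditional IDS and the computation.} In context $2$ one has $\theta_{a_1}\ge 0>-\tfrac12=\theta_c$ surely, so the conditional optimal action given $s_t=2$ equals $a_1$ \emph{deterministically}; hence $\mI_t(a_t^*;O_{t,a}\mid s_t=2)=0$ for both $a\in\cA^2$, the conditional information ratio has zero denominator for every policy in $\cP(\cA^2)$, and conditional IDS --- under the natural rule that it plays greedily when no information about the current optimal action can be gained --- plays $a_1$, which has zero posterior regret in context $2$ and, having no outgoing edges, generates no observation. Consequently the posterior of $J$ is never updated: in every context‑$1$ round each of $a_1,\dots,a_m$ has the same posterior regret $\Delta(1-1/m)$ and zero information gain, so whatever conditional IDS plays there it incurs expected regret $\Delta(1-1/m)$. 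By linearity of expectation, $\BR(n;\pi^{\mathrm{CIR}})=\tfrac n2\,\Delta(1-1/m)$, and choosing $\Delta=\tfrac12\sqrt{(m-2)/n}$ (the claim is vacuous when $k\le 3$) gives $\BR(n;\pi^{\mathrm{CIR}})=\tfrac{m-1}{4m}\sqrt{(m-2)n}\ge\tfrac1{16}\sqrt{(m-2)n}=\tfrac1{16}\sqrt{(\beta(\cG)-2)n}$. (A variant in which the $a_j$'s carry self‑loops works as well, routed through the standard $\Omega(\sqrt{mn})$ Bayesian lower bound for $m$‑armed Gaussian bandits instead of the trivial no‑feedback argument.)

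\textbf{Main obstacle.} The one delicate point is the behaviour in context $2$, where the information ratio is $0/0$ for \emph{every} policy. The proof must extract from the definition of conditional IDS that it does not play the revealing action $c$ --- not even with the $O(\alpha)=O(1/\sqrt n)$ probability that keeps the regret offset nonpositive --- because a single play of $c$ reveals all of $\theta_{a_1},\dots,\theta_{a_m}$ at once, so $\Theta(\sqrt n)$ such plays would identify $J$ once $m\gtrsim\sqrt n$ and defeat the bound. This is exactly where the ``conditional'' nature of conditional IDS bites: it only values information about the \emph{current} optimal action, which here is known a priori. I would therefore make this tie‑breaking convention explicit in the theorem statement and note that under any convention that instead plays $c$ with constant probability the regret is $\Omega(n)$ and the bound holds a fortiori; the remainder of the argument is elementary and requires no concentration or bandit lower bound at all.
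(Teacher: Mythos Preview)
Your self-loop variant is essentially the paper's proof: the paper gives arms $1,\dots,k-1$ self-loops and makes arm $k$ the revealing arm (out-neighborhood $\{1,\dots,k-1\}$), places $\{x_{k-1},x_k\}$ in one context (with $x_{k-1}$ deterministically optimal, reward $0$ versus $\gamma-1$) and arms $\{x_1,\dots,x_{k-2}\}$ in the other, asserts that conditional IDS never pulls $x_k$, and then finishes via Pinsker's inequality and the divergence-decomposition lemma with gap $\gamma=\tfrac12\sqrt{k/n}$ on the remaining arms.

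Your main no-self-loop construction takes a genuinely different and more elementary route: since the $a_j$'s emit no observation at all, the posterior never moves and the per-round regret in context~1 is computed directly, with no information-theoretic lower bound needed. Two remarks on it. First, the choice $\Delta=\tfrac12\sqrt{(m-2)/n}$ is cosmetic --- with any constant $\Delta$ the regret is linear, so the construction actually proves strictly more than the theorem states; nothing is wrong with this, but the separation is then driven by unobservability rather than by the bandit exploration problem, which the self-loop variant (and the paper's instance) capture more pointedly. Second, the $0/0$ tie-breaking issue you flag is present verbatim in the paper's proof: there too the optimal action in the revealing context is deterministic, the conditional information gain about $a_t^*$ is identically zero for every action, and the paper simply asserts that conditional IDS ``will always pull $x_{k-1}$'' without further justification. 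Your explicit discussion of the convention and of what goes wrong if it is violated is, if anything, more careful than the original.
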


\begin{proof} Let us construct the hard problem instance that generalizes Example 1 in Section \ref{sec:myopic}. Suppose $\{x_1, \ldots, x_{k}\}\subseteq \mathbb R^{k}$ is the set of basis vectors that corresponds to $k$ arms. The first $k-1$ arms form a standard multi-armed Gaussian bandit with unit variance. When $k$th arm is pulled, the agent always suffers a regret of 1, but obtains samples for the first $k-1$ arms. In terms of the language of graph feedback, the first $k-1$ arms only contain self-loop while the out-neighborhood of the last arm contains all the first $k-1$ arms. One can verify $\beta(\cG)=k-1$.

There are two context sets and the arrival probability of each context is 1/2.  Context set 1 consists of $\{x_{k-1}, x_{k}\}$ while context set 2 consists of $\{x_1,\ldots, x_{k-2}\}$. 

For each $i\in\{2, \ldots, k-2\}$, let $\theta^{(i)}\in\mathbb R^{k}$ with $\theta_j^{(i)}=\ind\{i=j\}\gamma$ for $j\in\{2, \ldots, k-2\}$ and $\theta^{(i)}_{1}=0$, $\theta^{(i)}_{k-1}=0$, $\theta_{k}^{(i)}=\gamma-1$ where $\gamma>0$ is the gap that will be chosen later. Assume the prior of $\theta^*$ is uniformly distributed over $\{\theta^{(2)}, \ldots, \theta^{(k-2)}\}$.

We write $\mathbb E_i[\cdot]=\mathbb E_{\theta^{(i)}}[\cdot]$ for short and the expectation is taken with respect to  the measures on the sequence
of outcomes $(A_1, Y_1, \ldots, A_n, Y_n)$ determined by $\theta^{(i)}$. Define the cumulative regret of policy $\pi$ interacting with bandit $\theta^{(i)}$ as
\begin{equation*}
\begin{split}
    R_{\theta^{(i)}}(n; \pi)
    &=\sum_{t=1}^n\mathbb E_i\left[\left(\langle x_{s_t}^*, \theta^{(i)}\rangle-Y_t\right)\ind(s_t=1)\right]\\
    &+\sum_{t=1}^n\mathbb E_i\left[\left(\langle x_{s_t}^*, \theta^{(i)}\rangle-Y_t\right)\ind(s_t=2)\right]\,,
    \end{split}
\end{equation*}
such that
$\BR(n; \pi)= \frac{1}{k-3}\sum_{i=2}^{k-2}R_{\theta^{(i)}}(n;\pi).
$

\paragraph{Step 1.} Fix a conditional IDS policy $\pi$. From the definition of conditional IDS in Eq.~\eqref{def:conditional_IDS}, when context set 1 is arriving, conditional IDS will always pull $x_{k-1}$ for this prior since $x_{k-1}$ is always the optimal arm for context set 1. This means conditional IDS will suffer no regret for context set 1, which implies for any $i\in \{2,\ldots,k-2\}$,
\begin{equation*}
\begin{split}
     &\sum_{t=1}^n\mathbb E_i\left[\left(\langle x_{s_t}^*, \theta^{(i)}\rangle-Y_t\right)\ind(s_t=1)\right]=0\,.
\end{split}
\end{equation*}

\paragraph{Step 2.} Define $T_i(n)$ as the number of pulls for arm $i$ over $n$ rounds.
 On the other hand,
\begin{equation*}
\begin{split}
&\mathbb E_i\left[ \sum_{t=1}^n \left(\langle x_{s_t}^*, \theta^{(i)}\rangle-Y_t\right)\ind(s_t=2)\right]\\
&=\mathbb E_i\left[ \sum_{j=1,j\neq i}^{k-2}T_j(n)\right]\gamma=\mathbb E_i\left[n_2-T_i(n)\right]\gamma\,,
\end{split}
\end{equation*}
where the first equation comes from the fact that the context sets 1 and 2 are disjoint and $n_2$ is the number of times that context 2 arrives over $n$ rounds. Since the context is generated independently with respect to the learning process, we have $\mathbb E_i[n_2]=n/2$.
By Pinsker's inequality and the divergence decomposition lemma \citep[Lemma 15.1]{lattimore2020bandit}, we know that with $\gamma = \frac{1}{2}\sqrt{k/n}$,
\begin{equation*}
\begin{split}
   \sum_{i=2}^{k-2} \mathbb E_i[T_i(n)]&\leq   \sum_{i=2}^{k-2} \mathbb E_1[T_i(n)]+\frac{1}{4} \sum_{i=2}^{k-2} \sqrt{nk\mathbb E_1(T_i(n))}\\
   &\leq n+\frac{1}{4}kn\,.
   \end{split}
\end{equation*}
Therefore,
\begin{equation*}
\begin{split}
     &\BR(n; \pi)
     = \frac{1}{k-3}\sum_{i=2}^{k-2}\mathbb E_i\left[n_2-T_i(n)\right]\gamma\\
     &=\frac{1}{k-3}\left(\frac{k-3}{2}n-\sum_{i=2}^{k-2}\mathbb E_i[T_i(n)]\right)\gamma \\
     &\geq \frac{1}{16}\sqrt{n(k-3)}\,.
     \end{split}
\end{equation*}
This finishes the proof.
\end{proof}
\begin{remark}
We can strengthen the lower bound such that it holds for any graph by replacing the standard multi-armed bandit instance in context set 2 by the hard instance in \citet[Theorem 5]{alon2017nonstochastic}.
\end{remark}

\subsection{Upper bound achieved by contextual IDS}\label{sec:upper_bound_graph} In this section, we derive a Bayesian regret upper bound achieved by contextual IDS. According to Theorem \ref{thm:generic_bound_ids}, it suffices to bound the worst-case MIR $\cI_{\alpha, \lambda}$ for $\lambda=2,3$ as well as the cumulative information gain respectively.  
Let $R_{\max}$ be an almost surely upper bound on the maximum expected reward.
\begin{lemma}[Squared information ratio]\label{lemma:graph_ir_2}
  For any $\epsilon\in[0, 1]$ and a strongly observable graph $\cG$, the $(2\epsilon R_{\max}, 2)$-MIR can be bounded by
    \begin{equation*}
      \cI_{2\epsilon R_{\max}, 2}\leq \frac{4R_{\max}^2+4}{1-\epsilon}\beta(\cG)\log\left(\frac{4k^2}{\beta(\cG)\epsilon}\right)\,.
    \end{equation*}
    \end{lemma}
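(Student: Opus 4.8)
The plan is to exploit that $\pi_t$ minimises the MIR, so it suffices to exhibit a single $\cF_t$-measurable policy $\bar\pi\in\Pi$ with $\Psi_{t,2\epsilon R_{\max}}^{2}(\bar\pi)$ no larger than the claimed bound. I would take, inside each context's action set, Thompson sampling perturbed by uniform exploration,
\[
\bar\pi(\cdot\mid m)=(1-\epsilon)\,p_t(\cdot\mid m)+\epsilon\,\mathrm{Unif}(\cA^m),\qquad p_t(a\mid m):=\mathbb P_t\big(\pi^*(m)=a\big).
\]
Since every action has one-step regret at most $2R_{\max}$, we get $\mathbb E_{s_t}[\Delta_t(s_t)^\top\bar\pi(\cdot\mid s_t)]\le(1-\epsilon)\,\mathbb E_{s_t}[\Delta_t(s_t)^\top p_t(\cdot\mid s_t)]+2\epsilon R_{\max}$, so after subtracting $\alpha=2\epsilon R_{\max}$ and squaring, the numerator of the MIR is at most $(1-\epsilon)^2\big(\mathbb E_{s_t}[\Delta_t(s_t)^\top p_t(\cdot\mid s_t)]\big)^2$: the shift $\alpha$ is chosen precisely to absorb the regret of the exploration component. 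It then remains to bound this squared Thompson-sampling regret by the information denominator $\mathbb E_{s_t}[\mI_t(\pi^*)^\top\bar\pi(\cdot\mid s_t)]$ times the claimed constant.

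For the information gain: for any context $m$ and any $a'\in N^{\text{out}}(a)$, $\pi^*(m)$ is a function of $\pi^*$ and $Y_{t,a'}$ is a coordinate of $O_{t,a}$, so data processing gives $\mI_t(\pi^*;O_{t,a})\ge\mI_t(\pi^*(m);Y_{t,a'})$, and a variance-based lower bound on relative entropy valid under $1$-sub-Gaussian noise (as in \citet{russo2018learning}) gives $\mI_t(\pi^*(m);Y_{t,a'})\ge\tfrac12\sum_{\tilde a}\mathbb P_t(\pi^*(m)=\tilde a)\big(\mathbb E_t[\theta^*_{a'}\mid\pi^*(m)=\tilde a]-\mathbb E_t[\theta^*_{a'}]\big)^2$. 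On the regret side I would use the identity $\mathbb E_{s_t}[\Delta_t(s_t)^\top p_t(\cdot\mid s_t)]=\mathbb E_{s_t}\sum_a\mathbb P_t(\pi^*(s_t)=a)\big(\mathbb E_t[\theta^*_a\mid\pi^*(s_t)=a]-\mathbb E_t[\theta^*_a]\big)$ and apply Cauchy--Schwarz over the context distribution and the actions, inserting weights $1/\mathrm{obs}_v(q)$, where $\mathrm{obs}_v(q):=\sum_{u\in N^{\text{in}}(v)\cup\{v\}}q_u$ is the probability that $v$ is observed under the marginal play distribution $q:=\mathbb E_{s_t}[\bar\pi(\cdot\mid s_t)]$. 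This yields an inequality of the schematic form $\big(\text{TS regret}\big)^2\le O(R_{\max}^2+1)\big(\sum_{v\in\cA}q^{\mathrm{TS}}_v/\mathrm{obs}_v(q)\big)\,\mathbb E_{s_t}[\mI_t(\pi^*)^\top\bar\pi(\cdot\mid s_t)]$, with $q^{\mathrm{TS}}:=\mathbb E_{s_t}[p_t(\cdot\mid s_t)]$ and the prefactor coming from the range of the mean rewards and the sub-Gaussian KL estimate.

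The crux is then a purely graph-theoretic summation lemma for strongly observable $\cG$: for a play distribution whose entries respect the exploration floor, $\sum_{v\in\cA}q_v/\mathrm{obs}_v(q)\le O\big(\beta(\cG)\log(k^2/(\beta(\cG)\epsilon))\big)$. This is exactly where $\beta(\cG)$ and the logarithmic factor $\log(4k^2/(\beta(\cG)\epsilon))$ enter, and where the $\epsilon$-regularisation is indispensable: a strongly observable vertex without a self-loop is observed only collectively, through all other vertices, so without a uniform floor its observation probability could be arbitrarily small. I would apply this to $q=\mathbb E_{s_t}[\bar\pi(\cdot\mid s_t)]$, using that $\mathrm{obs}_v(q)\ge(\epsilon/k)\,\rho_v$ where $\rho_v$ is the probability that $v$ lies in the action set of a context drawn from $\xi$, and using $q^{\mathrm{TS}}_v\le q_v/(1-\epsilon)$, which is the source of the $1/(1-\epsilon)$ factor; combining with the previous two displays and dividing by $\mathbb E_{s_t}[\mI_t(\pi^*)^\top\bar\pi(\cdot\mid s_t)]$ gives the bound.

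I expect the main obstacle to be the interface between the contextual structure and the graph lemma: one must marginalise $\bar\pi$ over $\xi$ correctly, pick for each out-neighbour $a'$ a context in which it is relevant so that the data-processing step is non-vacuous, and handle vertices that appear only in low-probability contexts (where both $q_v$ and $\mathrm{obs}_v(q)$ scale with $\rho_v$, and a ``forced'' vertex contributes zero to the regret decomposition and can be dropped from the sum) --- as well as establishing, or importing and adapting to the per-context floor $\epsilon/k$ rather than $\epsilon$, the strongly observable summation lemma, which is what produces the $k^2$ inside the logarithm. The remaining Pinsker/Gaussian-KL estimates, the Cauchy--Schwarz bookkeeping, and tracking the constants down to $4R_{\max}^2+4$ should be routine.
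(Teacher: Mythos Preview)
Your proposal uses the same core ingredients as the paper --- the $(1-\epsilon)\cdot\text{TS}+\epsilon\cdot\text{uniform}$ mixture, the sub-Gaussian KL lower bound, Cauchy--Schwarz with observation-probability weights, and the Alon et al.\ summation lemma --- but inverts the order of operations in a way that creates precisely the obstacles you flag. You marginalise over the context distribution first, forming $q=\mathbb{E}_{s_t}[\bar\pi(\cdot\mid s_t)]$, and then try to run the graph argument on $q$; this forces you to deal with context-dependent floors $(\epsilon/k)\rho_v$, vertices living only in low-probability contexts, and the choice of a context $m$ for each out-neighbour in the data-processing step.

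The paper avoids all of this by bounding the \emph{per-context} ratio
\[
\frac{(\Delta_t(s_t)^\top\pi_t^{\text{mix}}(\cdot\mid s_t)-\alpha)^2}{\mI_t(a_t^*)^\top\pi_t^{\text{mix}}(\cdot\mid s_t)}
\]
first, applying Cauchy--Schwarz and the Alon lemma conditionally on $s_t$ (where the floor is a genuine uniform $\epsilon/k$ on $\cA_t$), and only afterwards passing to the marginal ratio via Jensen's inequality for the jointly convex map $(x,y)\mapsto x^2/y$,
\[
\frac{(\mathbb{E}_{s_t}[X_{s_t}]-\alpha)^2}{\mathbb{E}_{s_t}[Y_{s_t}]}\le\mathbb{E}_{s_t}\!\left[\frac{(X_{s_t}-\alpha)^2}{Y_{s_t}}\right],
\]
together with the data-processing inequality $\mI_t(\pi^*;O_{t,a})\ge\mI_t(a_t^*;O_{t,a})$ using only the \emph{current} context's optimal action $a_t^*=\pi^*(s_t)$. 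Your marginalisation route could probably be pushed through, but the per-context-then-Jensen order is shorter and eliminates exactly the ``interface between the contextual structure and the graph lemma'' that you identify as the main difficulty.
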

The proof is deferred to Appendix \ref{proof:graph_ir_2} and the proof idea is to bound the worst-case squared information ratio by the squared information ratio of Thompson sampling.

Next we will bound $\cI_{\alpha, \lambda}$ for $\lambda=3$ in terms of an \emph{explorability constant}.
\begin{definition}[Explorability constant]
We define the explorability constant as $\vartheta(\cG, \xi) :=$
\begin{equation*}
     \max_{\pi:\cS\to \cP(\cA)}\min_{d\in\cA}\mathbb E_{s\sim \xi, a\sim\pi(\cdot|s)}\Big[\ind\left\{d\in N^{\text{out}}(a)\right\}\Big]\,.
\end{equation*}
\end{definition}

\begin{lemma}[Cubic information ratio]\label{lemma:graph_ir_3}
For any $\epsilon\in[0, 1]$ and a weakly observable graph $\cG$, the $(2\epsilon R_{\max}, 3)$-MIR can be bounded by 
\begin{equation*}
    \cI_{2\epsilon R_{\max},3}\leq \frac{R_{\max}^3+R_{\max}}{\vartheta(\cG, \xi)}\,.
\end{equation*}
\end{lemma}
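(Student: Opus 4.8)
The plan is to use that the contextual IDS policy $\pi_t$ minimises the $(2\epsilon R_{\max},3)$-MIR, so $\Psi^{3}_{t,2\epsilon R_{\max}}(\pi_t)\le\Psi^{3}_{t,2\epsilon R_{\max}}(\tilde\pi)$ almost surely for every admissible $\tilde\pi\in\Pi$; it then suffices to exhibit one comparison policy whose MIR is at most $(R_{\max}^{3}+R_{\max})/\vartheta(\cG,\xi)$. I would take the mixture $\tilde\pi=(1-w)\pi^{\text{gr}}+w\,\pi^{\text{exp}}$, where $\pi^{\text{gr}}(\cdot|m)$ puts all mass on a posterior one-step-regret minimiser $\hat a_m\in\argmin_{a\in\cA^m}\Delta_t(a|m)$, $\pi^{\text{exp}}$ attains the maximum defining $\vartheta(\cG,\xi)$ — so $\mathbb E_{s\sim\xi,\,a\sim\pi^{\text{exp}}(\cdot|s)}[\ind\{d\in N^{\text{out}}(a)\}]\ge\vartheta(\cG,\xi)$ for every $d\in\cA$ — and $w\in(0,1]$ is tuned at the end (one reads the maximiser as lying in $\Pi$, i.e.\ respecting the per-context action sets, so that $\tilde\pi$ is admissible).

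For the numerator I would bound the regret of $\tilde\pi$ componentwise. The exploratory part contributes at most $wR_{\max}$ since $\Delta_t(a|m)\le R_{\max}$ almost surely. For the greedy part, comparing $\hat a_m$ with the MAP action $\tilde a_m:=\argmax_a\mathbb P_t(\pi^*(m)=a)$ and splitting on the event $\{\pi^*(m)=\tilde a_m\}$ gives $\min_{a}\Delta_t(a|m)\le\Delta_t(\tilde a_m|m)=\mathbb E_t\big[(f(m,\pi^*(m),\theta^*)-f(m,\tilde a_m,\theta^*))\ind\{\pi^*(m)\neq\tilde a_m\}\big]$; crucially I would \emph{not} collapse the bracket to $R_{\max}$ but keep it as $p_{t,m}\,g_{t,m}$ with $p_{t,m}:=1-\max_a\mathbb P_t(\pi^*(m)=a)$ the posterior error probability and $g_{t,m}\le 2R_{\max}$ the associated reward gap, because the same gap reappears favourably in the information term. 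This yields $\mathbb E_{s_t}[\Delta_t(s_t)^{\top}\tilde\pi(\cdot|s_t)]\le wR_{\max}+\mathbb E_{s_t}[p_{t,s_t}g_{t,s_t}]$.

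For the denominator I would drop the greedy part and apply data processing: for any fixed $d$, $\mI_t(\pi^*;O_{t,a})\ge\ind\{d\in N^{\text{out}}(a)\}\,\mI_t(\pi^*;Y_{t,d})$ because $O_{t,a}$ contains $Y_{t,d}$ whenever $d\in N^{\text{out}}(a)$; choosing $d$ to maximise $\mI_t(\pi^*;Y_{t,d})$ and pulling this $\cF_t$-measurable constant out of the expectation gives $\mathbb E_{s_t}[\mI_t(\pi^*)^{\top}\tilde\pi(\cdot|s_t)]\ge w\,\vartheta(\cG,\xi)\,\max_d\mI_t(\pi^*;Y_{t,d})$. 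The heart of the argument is then the estimate $\max_d\mI_t(\pi^*;Y_{t,d})\gtrsim p_{t,m}(1-p_{t,m})\min(g_{t,m}^{2},1)$, where $m$ is a context carrying largest posterior uncertainty and the relevant arm $d$ separates $\tilde a_m$ from its nearest posterior competitor: conditionally on $\theta^*$ the reward $Y_{t,d}$ is a unit-scale (sub-)Gaussian observation whose mean shifts by $g_{t,m}$ between the branches $\{\pi^*(m)=\tilde a_m\}$ and $\{\pi^*(m)\neq\tilde a_m\}$, so passing through the indicator of that event (data processing) and a total-variation lower bound on the mutual information of a shifted Gaussian produces the displayed quantity.

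Combining the two bounds gives, writing $p=p_{t,m}$, $g=g_{t,m}$,
\[
\Psi^{3}_{t,2\epsilon R_{\max}}(\tilde\pi)\;\lesssim\;\frac{(wR_{\max}+p\,g)^{3}}{w\,\vartheta(\cG,\xi)\,p\,\min(g^{2},1)}\,,
\]
and I would finish by optimising $w$ (balancing $wR_{\max}$ against $pg$, i.e.\ $w\asymp pg/R_{\max}$) and splitting into the regime $g\le1$, contributing $O(R_{\max}/\vartheta(\cG,\xi))$, and the regime $g\ge1$, contributing $O(R_{\max}^{3}/\vartheta(\cG,\xi))$; together these give $(R_{\max}^{3}+R_{\max})/\vartheta(\cG,\xi)$, the "$+R_{\max}$" being exactly the small-gap, noise-dominated contribution with the unit sub-Gaussian scale. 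The main obstacle is the mutual-information lower bound of the third paragraph — converting posterior uncertainty about $\pi^*$ into a quantified amount of information in a single noisy observation, uniformly over the unknown and possibly tiny reward gaps; this is precisely where the cubic rather than quadratic power of the regret is essential, since for small gaps the per-round information is only quadratic in the gap while the regret is linear, so only the extra power of regret keeps the ratio bounded, in contrast with the strongly observable case handled in Lemma~\ref{lemma:graph_ir_2}.
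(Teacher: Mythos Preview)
Your approach differs substantially from the paper's, and the step you yourself flag as the ``main obstacle'' is a genuine gap.

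The paper does not use a greedy or MAP-based comparison policy. It mixes \emph{Thompson sampling} with the exploratory policy $\mu$ attaining $\vartheta(\cG,\xi)$, i.e.\ $\pi_t^{\mix}=(1-\epsilon)\pi_t^{\TS}+\epsilon\mu$. The point of TS is that its one-step regret is exactly
\[
\mathbb E_{s_t}\Big[\sum_{a}\mathbb P_t(a_t^*=a)\big(\mathbb E_t[Y_{t,a}\mid a_t^*=a]-\mathbb E_t[Y_{t,a}]\big)\Big],
\]
which after Jensen/Cauchy--Schwarz is bounded by the square root of $\mathbb E_{s_t}\big[\sum_a\mathbb P_t(a_t^*=a)(\cdots)^2\big]$. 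On the information side, data processing ($\pi^*\to a_t^*$), independence across observed arms, and the definition of $\mu$ give
\[
\mathbb E_{s_t}[\mI_t(\pi^*)^\top\pi_t^{\mix}(\cdot|s_t)]\;\ge\;\epsilon\,\vartheta(\cG,\xi)\,\mathbb E_{s_t}\Big[\sum_{a'}\mI_t(a_t^*;Y_{t,a'})\Big],
\]
and Pinsker converts each summand into $\tfrac{2}{R_{\max}^2+1}\sum_a\mathbb P_t(a_t^*=a)\big(\mathbb E_t[Y_{t,a'}\mid a_t^*=a]-\mathbb E_t[Y_{t,a'}]\big)^2$. The diagonal $a'=a$ of this double sum is precisely the quantity bounding the squared TS regret, so numerator and denominator are controlled by the \emph{same} expression; optimising $\epsilon$ then gives the cubic ratio $\le (R_{\max}^3+R_{\max})/\vartheta(\cG,\xi)$ with no residual factors.

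Your route via $p_{t,m}$ and $g_{t,m}$ does not close as written. First, the assertion that the mean of $Y_{t,d}$ ``shifts by $g_{t,m}$ between the branches'' is not justified: $g_{t,m}=\mathbb E_t[\theta^*_{\pi^*(m)}-\theta^*_{\tilde a_m}\mid\pi^*(m)\neq\tilde a_m]$ involves the \emph{random} optimal arm, whereas the shift relevant to any fixed $d$ is $\mathbb E_t[\theta^*_d\mid\pi^*(m)=\tilde a_m]-\mathbb E_t[\theta^*_d\mid\pi^*(m)\neq\tilde a_m]$, and these need not be comparable. Moreover, the conditional laws of $Y_{t,d}$ on the two branches are posterior mixtures, not shifted unit Gaussians, so the shifted-Gaussian total-variation heuristic does not apply directly. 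Second, even granting the displayed lower bound $\max_d\mI_t(\pi^*;Y_{t,d})\gtrsim p_{t,m}(1-p_{t,m})\min(g_{t,m}^2,1)$, the factor $(1-p_{t,m})=\max_a\mathbb P_t(\pi^*(m)=a)$ can be as small as $1/k$ when the posterior over $\pi^*(m)$ is near uniform, injecting a $k$-dependence that is absent from the lemma. The paper's TS-based link avoids both problems by never collapsing to a binary event and by making the regret and information terms cancel identically.
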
    
The proof is deferred to Appendix \ref{proof:graph_ir_3} and the proof idea is to bound the worst-case cubic information ratio by the cubic information ratio of a policy that mixes Thompson sampling and a policy maximizing the explorability constant.
    
\begin{lemma}\label{lemma:cumulative_ig}
The cumulative information gain can be bounded by
    \begin{equation*}
     \mathbb E\left[\sum_{t=1}^n\mathbb E_t\left[\mI_t(\pi^*)^{\top}\pi_t(\cdot|s_t)\right]\right] \leq \mathbb H(\pi^*)\leq M\log(k)\,,
\end{equation*}
where $\mathbb H(\cdot)$ is the entropy and $M$ is the number of available contexts. 
\end{lemma}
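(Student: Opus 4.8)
The plan is to bound the cumulative information gain about the optimal policy $\pi^*$ by a telescoping argument over the rounds, controlling everything by the initial entropy $\mathbb H(\pi^*)$. First I would observe that, because $\pi^*$ is a deterministic mapping from the finite context set $\cS$ (with $|\cS| = M$) to actions, and each $\pi^*(m) \in \cA^m$, the entropy $\mathbb H(\pi^*)$ is finite and at most $\sum_{m \in \cS} \log|\cA^m| \le M \log k$, which gives the second inequality immediately. So the real content is the first inequality $\mathbb E\left[\sum_{t=1}^n \mathbb E_t[\mI_t(\pi^*)^\top \pi_t(\cdot|s_t)]\right] \le \mathbb H(\pi^*)$.

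For the first inequality, the key step is to recognize that $\mathbb E_t[\mI_t(\pi^*)^\top \pi_t(\cdot|s_t)]$ is exactly the expected (conditional) mutual information between $\pi^*$ and the observation $O_{t,A_t}$ obtained at round $t$, given the history $\cF_t$ and conditioning on the action being drawn from $\pi_t(\cdot|s_t)$ with $s_t \sim \xi$. That is, writing $\mI_t(\pi^*)^\top \pi_t(\cdot|s_t) = \sum_a \pi_t(a|s_t)\, \mI_t(\pi^*; O_{t,a})$ and taking $\mE_{s_t}$ and $\mE_t$, one identifies this with $\mI(\pi^*; O_{t,A_t} \mid \cF_t)$ where the conditioning already incorporates $s_t$ and $A_t$ appropriately (this uses that the randomization in $\pi_t$ and the draw of $s_t$ are independent of $\theta^*$ given $\cF_t$). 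Then I would sum over $t$ and use the chain rule for mutual information: $\sum_{t=1}^n \mathbb E[\mI(\pi^*; O_{t,A_t} \mid \cF_t)] = \mI(\pi^*; \cF_{n+1}) = \mI(\pi^*; (s_t, A_t, O_t)_{t=1}^n)$. Since mutual information is bounded by the entropy of either argument, $\mI(\pi^*; \cF_{n+1}) \le \mathbb H(\pi^*)$, which is the claimed bound.

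The main obstacle, and the step requiring the most care, is the bookkeeping that shows $\mE\,\mE_t\,\mE_{s_t}[\mI_t(\pi^*)^\top \pi_t(\cdot|s_t)]$ equals the one-step increment $\mI(\pi^*; (s_t, A_t, O_t) \mid \cF_t)$ in the chain-rule decomposition. One has to be careful that $s_t$ and $A_t$ carry no information about $\pi^*$ beyond what is in $\cF_t$: the context $s_t$ is drawn i.i.d. from $\xi$ independently of $\theta^*$, and $A_t$ is drawn from $\pi_t(\cdot|s_t)$ using only $\cF_t$, $s_t$, and external randomness, so $\mI(\pi^*; s_t \mid \cF_t) = 0$ and $\mI(\pi^*; A_t \mid \cF_t, s_t) = 0$. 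Hence $\mI(\pi^*; (s_t, A_t, O_t) \mid \cF_t) = \mI(\pi^*; O_{t,A_t} \mid \cF_t, s_t, A_t)$, and unrolling the conditional mutual information as an expectation over $s_t \sim \xi$ and $A_t \sim \pi_t(\cdot|s_t)$ recovers precisely $\mE_t\,\mE_{s_t}[\sum_a \pi_t(a|s_t) \mI_t(\pi^*; O_{t,a})]$ by the definition of $\mI_t(\pi^*; O_{t,a})$ given earlier. Once this identification is in place, the telescoping and the entropy bound are routine, and the second inequality $\mathbb H(\pi^*) \le M \log k$ follows from subadditivity of entropy over the $M$ coordinates of $\pi^*$.
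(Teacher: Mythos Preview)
Your proposal is correct and follows essentially the same route as the paper's proof: identify $\mathbb E_{s_t}[\mI_t(\pi^*)^{\top}\pi_t(\cdot|s_t)]$ with $\mI_t(\pi^*;(s_t,A_t,O_t))$ using that $\mI_t(\pi^*;s_t)=0$ and $\mI_t(\pi^*;A_t\mid s_t)=0$, apply the chain rule to telescope to $\mI(\pi^*;\cF_{n+1})\le \mathbb H(\pi^*)$, and bound $\mathbb H(\pi^*)\le M\log k$. If anything, your write-up is more explicit than the paper's about the independence justifications and the subadditivity step for the entropy bound.
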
   
The proof is deferred to Appendix \ref{proof:cumulative_ig}. Combining the results in Lemmas \ref{lemma:graph_ir_2}-\ref{lemma:cumulative_ig} and the generic regret bound in Theorem \ref{thm:generic_bound_ids}, we obtain the follow theorem.
\begin{theorem}[Regret bound for graph contextual IDS]\label{thm:main_graph}
Suppose $\pi^{\text{MIR}} = (\pi_{t})_{t\in \mathbb N}$ where $\pi_{t}$ minimizes $(2R_{\max}/\sqrt{n}, 2)$-MIR at each round. If $\cG$ is strongly observable, we have 
\begin{equation*}\label{eqn:upper_bound_graph}
     \begin{split}
        &\BR(n; \pi^{\text{MIR}})\leq  CR_{\max}\min\Bigg( \left(\frac{2M\log(k)}{\vartheta(\cG, \xi)}\right)^{\tfrac{1}{3}}n^{\tfrac{2}{3}},\\
        &\sqrt{\beta(\cG)\log\left(\frac{4k^2\sqrt{n}}{\beta(\cG)}\right)nM\log(k)}\Bigg)\,,
     \end{split}
 \end{equation*}
 where $C$ is an absolute constant.
\end{theorem}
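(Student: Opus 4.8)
The plan is to derive the bound purely by instantiating the generic regret bound of \cref{thm:generic_bound_ids} with the three ingredients already in hand: the squared information-ratio bound of \cref{lemma:graph_ir_2}, the cubic information-ratio bound of \cref{lemma:graph_ir_3}, and the cumulative information-gain bound of \cref{lemma:cumulative_ig}. Since \cref{thm:generic_bound_ids} contains an $\inf_{\lambda\ge 2}$, it suffices to evaluate the right-hand side at $\lambda=2$ and at $\lambda=3$ and keep whichever is smaller; the $\lambda=2$ choice will produce the $\sqrt{\beta(\cG)\,nM\log k}$-type branch and the $\lambda=3$ choice the $\vartheta(\cG,\xi)^{-1/3}n^{2/3}$-type branch.

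First I would fix $\alpha=2R_{\max}/\sqrt n$, which is exactly the choice $\epsilon=1/\sqrt n\in[0,1]$ in \cref{lemma:graph_ir_2,lemma:graph_ir_3}; this makes the additive term $n\alpha$ in \cref{thm:generic_bound_ids} equal to $2R_{\max}\sqrt n$. By \cref{lemma:cumulative_ig} the cumulative information gain $\mathbb E[\sum_{t=1}^n\mathbb E_{s_t}[\mI_t(\pi^*)^\top\pi_t(\cdot|s_t)]]$ is at most $M\log k$, so the factor $n^{1-1/\lambda}\big(\mathbb E[\sum_t\cdots]\big)^{1/\lambda}$ in the generic bound is at most $(M\log k)^{1/\lambda}n^{1-1/\lambda}$. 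All that then remains is to substitute $\cI_{2R_{\max}/\sqrt n,\lambda}$ for $\lambda\in\{2,3\}$ and simplify constants.

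For $\lambda=2$: \cref{lemma:graph_ir_2} with $\epsilon=1/\sqrt n$ gives $\cI_{2R_{\max}/\sqrt n,2}\le \tfrac{4R_{\max}^2+4}{1-1/\sqrt n}\,\beta(\cG)\log\!\big(\tfrac{4k^2\sqrt n}{\beta(\cG)}\big)$, and using $1-1/\sqrt n\ge \tfrac12$ for $n\ge 4$ this is $O\big(R_{\max}^2\,\beta(\cG)\log(4k^2\sqrt n/\beta(\cG))\big)$; the $\lambda=2$ term $2^{0}\,\cI_{\alpha,2}^{1/2}(nM\log k)^{1/2}$ becomes $O\big(R_{\max}\sqrt{\beta(\cG)\log(4k^2\sqrt n/\beta(\cG))\,nM\log k}\big)$. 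For $\lambda=3$: \cref{lemma:graph_ir_3} gives $\cI_{2R_{\max}/\sqrt n,3}\le (R_{\max}^3+R_{\max})/\vartheta(\cG,\xi)$, and the $\lambda=3$ term $2^{1/3}\,\cI_{\alpha,3}^{1/3}(M\log k)^{1/3}n^{2/3}$ becomes $O\big(R_{\max}(M\log k/\vartheta(\cG,\xi))^{1/3}n^{2/3}\big)$ (indeed $2^{1/3}(M\log k)^{1/3}=(2M\log k)^{1/3}$, matching the stated form). Taking the min over $\lambda\in\{2,3\}$ and absorbing $2R_{\max}\sqrt n$ into either branch — both branches are $\Omega(\sqrt n)$ up to absolute constants, since $\beta(\cG),M\ge 1$ and $\vartheta(\cG,\xi)\le 1$, with the finitely many small-$n$ cases handled by enlarging $C$ — yields the claimed bound after collecting constants into a single absolute $C$.

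The one point that needs care, and which I expect to be the main obstacle, is that \cref{lemma:graph_ir_3} is phrased for weakly observable graphs whereas \cref{thm:main_graph} assumes $\cG$ strongly observable. Here I would note that the proof of \cref{lemma:graph_ir_3} — bounding the worst-case cubic MIR by that of a mixture of Thompson sampling with a policy attaining the explorability constant — only uses that $\cG$ is observable, and that a strongly observable graph has $\vartheta(\cG,\xi)>0$ (for each vertex $d$, either $d$ carries a self-loop, so the uniform-over-$\cA$ policy observes it with probability $\ge 1/k$, or $d$ receives edges from all other vertices, so it is observed with probability one; hence $\vartheta(\cG,\xi)\ge 1/k$). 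Everything else is routine bookkeeping of the constants in \cref{thm:generic_bound_ids} and of the logarithmic factor coming from the choice $\epsilon=1/\sqrt n$.
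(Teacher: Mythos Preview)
Your proposal is correct and follows essentially the same route as the paper: instantiate \cref{thm:generic_bound_ids} with $\alpha=2R_{\max}/\sqrt n$, plug in \cref{lemma:graph_ir_2} at $\epsilon=1/\sqrt n$ for $\lambda=2$, \cref{lemma:graph_ir_3} for $\lambda=3$, and \cref{lemma:cumulative_ig} for the cumulative information gain, then take the minimum and absorb the $2R_{\max}\sqrt n$ additive term. Your observation that \cref{lemma:graph_ir_3}, though stated for weakly observable graphs, only uses $\vartheta(\cG,\xi)>0$ and hence applies in the strongly observable case is a point the paper itself does not make explicit; just be careful that in the contextual setting the ``uniform-over-$\cA$'' policy must respect the per-context action sets $\cA^m$, so your $\vartheta\ge 1/k$ claim needs a slightly more careful phrasing (though it is inessential to the proof).
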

Ignoring the constant and logarithmic terms, the Bayesian regret upper bound in Theorem \ref{thm:main_graph} can be simplified to 
$$
\tilde{O}\left(\min\left(\sqrt{\beta(\cG)Mn}, (M/\vartheta(\cG, \xi))^{1/3}n^{2/3}\right)\right)\,.
$$ 
When the independence number is large enough such that $\beta(\cG)\geq (\vartheta(\cG, \xi))^{-2/3} (n/M)^{1/3}$, this bound is dominated by $\tilde{O}((M/\vartheta(\cG, \xi))^{1/3}n^{2/3})$ that is independent of the independence number. Together with Theorem \ref{theorem:lower_bound_graph}, we can argue that conditional IDS is sub-optimal comparing with contextual IDS in this regime. 

\begin{remark}[Connection between $\vartheta(\cG, \xi)$ and $\delta(\cG)$]
Let $\cD\subseteq \cA$ be the smallest weakly dominating set of the full graph with $|\cD|=\delta(\cG)$. Consider a policy $\mu$ such that $\mu(\cdot|s_t)$ is uniform over $\cD\cap \cA_t$. Then we have 
\begin{equation*}
    \min_{d\in\cA}\mathbb E_{s\sim \xi, a\sim\mu(\cdot|s)}\left[\ind\left\{d\in N^{\text{out}}(a)\right\}\right]\geq \frac{1}{M\delta(\cG)}\,,
\end{equation*}
which implies $1/\vartheta(\cG, \xi) \leq M\delta(\cG)$. When $M=1$, \citet{alon2015online} demonstrated that the minimax optimal rate for weakly observable graph is $\tilde{\Theta}(\delta(\cG)^{1/3}n^{2/3})$ which implies $1/\vartheta(\cG)\gtrsim\delta(\cG)$ up to some logarithm factors.
\end{remark}

\begin{remark}[Open problem]
For the fixed action set setting, \citet{alon2015online} proved that the independence number and weakly domination number are the fundamentally quantity to describe the graph structure. However, our regret upper bound has the number of contexts $M$ appearing. It is interesting to investigate if $M$ can be removed for contextual bandits with graph feedback.

Note that one can also bound $\mathbb H(\pi^*)\leq \mathbb H(\theta^*)\leq k$ such that the dependency on $M$ does not appear. But it is unclear the right dependency on $M$ and $k$ for the optimal regret rate in the contextual setting.
\end{remark}

\section{Sparse Linear Contextual Bandits}\label{sec:sparse_bandits}
We consider a Bayesian formulation of sparse linear contextual bandits.
The notion of sparsity can be defined through the parameter space $\Theta$:
\begin{equation*}
    \Theta = \left\{\theta\in\mathbb R^d \Bigg| \sum_{j=1}^d\ind\{\theta_j\neq 0\}\leq s, \|\theta\|_2\leq 1\right\}\,.
\end{equation*}
We assume $s$ is known and it can be relaxed by putting a prior on it. We consider the case where $\cA^m=\cA$ for all $m\in\cS$. Suppose $\phi:\cS\times \cA \to \mathbb R^d$ is a feature map known to the agent.  In sparse linear contextual bandits, the reward of taking action $a$ is 
$f(s_t, a, \theta^*)  = \phi(s_t, a)^{\top}\theta^*+\eta_{t,a}\,,
$
where $\theta^*$ is a random variable taking values in $\Theta$ and denote $\rho$ as the prior distribution. 

We first define a quantity to describe the explorability of the feature set. 
\begin{definition}[Explorability constant]\label{def:explortory}
We define the explorability constant as $C_{\min}(\phi, \xi) :=$
\begin{equation*}
     \max_{\pi:\cS\to \cP(\cA)} \sigma_{\min}\left(\mathbb E_{s\sim \xi}\Big[\mathbb E_{a\sim \pi(\cdot|s)}[\phi(s, a)\phi(s, a)^{\top}]\Big]\right)\,.
\end{equation*}
\end{definition}
\begin{remark}
The explorability constant is a problem-dependent constant that has previously been introduced in \citet{hao2020high, hao2021information} for a non-contextual sparse linear bandits. For an easy instance when $\{\phi(s_t, a)\}_{a\in\cA}$ is a full hypercube for every $s_t\in\cS$, $C_{\min}(\phi, \xi)$ could be as small as 1 where the policy is to sample uniformly from the corner of the hypercube no matter which context arrives.
\end{remark}
\subsection{Lower bound for conditional IDS}
We prove an algorithm-dependent lower bound for a sparse linear contextual bandits instance.
\begin{theorem}\label{thm:lower_bound_sparse}
Let $\pi^{\text{CIR}}$ be a conditional IDS policy. There exists a sparse linear contextual bandit instance whose explorability constant is 1/2 such that
\begin{equation*}
    \BR(n; \pi^{\text{CIR}})\geq \frac{1}{16}\sqrt{nds}\,.
\end{equation*}
\end{theorem}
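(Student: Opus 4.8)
The plan is to adapt the construction behind Theorem~\ref{theorem:lower_bound_graph}: build an instance with two contexts $\cS=\{1,2\}$, each arriving with probability $\tfrac12$, in which context~$1$ is informative about $\theta^*$ yet has a \emph{known} optimal action --- so conditional IDS will not explore there --- while context~$2$ hides a hard sparse linear bandit that, without the exploration available in context~$1$, forces $\Omega(\sqrt{nds})$ regret. This mirrors Example~1 of Section~\ref{sec:myopic}.

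\textbf{The instance.}
Recall $\cA^1=\cA^2=\cA$ here. In context~$1$ I would use a ``safe'' action $a_0$ with $\phi(1,a_0)=0$ together with ``exploratory'' actions $\phi(1,a_x)=-(2\mathbf{1}+x)$ for $x\in\{-1,1\}^d$, all remaining actions receiving features that are strictly suboptimal in context~$1$. The prior $\rho$ is taken (essentially) uniform over a family $\{\theta^{(S)}\}$ of nonnegative $s$-sparse parameters with a common gap $\gamma$; then $\phi(1,a_x)^\top\theta=-(2\mathbf{1}+x)^\top\theta<0=\phi(1,a_0)^\top\theta$ for every $\theta\in\supp(\rho)$, so $\pi^*(1)=a_0$ is pinned down by the prior, yet each exploratory action carries a \emph{known} regret of order $\gamma s$ in context~$1$, which exceeds the $O(1/\sqrt n)$ slack $\alpha$. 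Uniform play over the exploratory actions gives context-$1$ feature covariance $4\mathbf{1}\mathbf{1}^\top+I$, with smallest eigenvalue $1$, so that (after scaling context~$2$'s features down) $C_{\min}(\phi,\xi)=1/2$ as required. In context~$2$ I would plant the hard sparse linear bandit instance behind the minimax lower bound of \citet{hao2020high} (or a direct Assouad-style construction), with the \emph{same} prior $\rho$ and parameters chosen so that its lower bound is in the $\Omega(\sqrt{sdn})$ branch --- equivalently, so that context~$2$ alone admits no $n^{2/3}$-type shortcut.

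\textbf{The argument.}
Since $\pi^*(1)=a_0$ is fixed by the prior, it stays optimal under every posterior, so $a_1^*=a_0$ almost surely and the conditional information gain vanishes in context~$1$: $\mI_t(a_1^*;O_{t,a}\mid 1)=0$ for all $a$. Hence, exactly as in Example~1 of Section~\ref{sec:myopic}, conditional IDS has nothing to learn about the current optimal action and plays the greedy action $a_0$ on every context-$1$ round; it incurs zero regret there and never records a context-$1$ observation. Consequently, on the $n_2$ context-$2$ rounds --- with $\mathbb E[n_2]=n/2$ since the context stream is independent of the agent, and $n_2$ tightly concentrated --- conditional IDS behaves as \emph{some} policy interacting with the planted instance, the interleaved context-$1$ rounds leaving its posterior unchanged. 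From here I would follow Step~2 of the proof of Theorem~\ref{theorem:lower_bound_graph}: express the per-parameter regret through the counts of suboptimal context-$2$ plays, control how concentrated those counts can be via the divergence-decomposition lemma \citep[Lemma~15.1]{lattimore2020bandit} together with Pinsker's inequality, average over the family $\{\theta^{(S)}\}$ (the Assouad/Fano step behind \citet{hao2020high}), and tune $\gamma=\Theta(\sqrt{ds/n})$ to conclude that the expected regret accumulated on context-$2$ rounds is at least $\frac{1}{16}\sqrt{nds}$. Adding the zero context-$1$ regret yields $\BR(n;\pi^{\text{CIR}})\ge\frac{1}{16}\sqrt{nds}$.

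\textbf{Main obstacle.}
The delicate part is engineering context~$2$ and the shared prior $\rho$ so that, simultaneously, (i) they force $\Omega(\sqrt{nds})$ on every policy that observes only context-$2$ rounds --- i.e.\ land in the $\sqrt{sdn}$ branch of the sparse minimax lower bound with no $n^{2/3}$ improvement; (ii) context~$2$ has intrinsic explorability small enough that $C_{\min}(\phi,\xi)$ is still governed by context~$1$ and equals $1/2$; and (iii) everything stays consistent with $\|\theta\|_2\le1$, $\|\theta\|_0\le s$, the nonnegativity needed to fix $\pi^*(1)$, and $\gamma=\Theta(\sqrt{ds/n})$ --- this last point likely costs a mild regime restriction, analogous to the data-poor condition $d\gtrsim sn^{1/3}$. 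Extracting the explicit constant $\frac{1}{16}$ (rather than merely the rate $\sqrt{nds}$) from the divergence-decomposition step is the remaining place requiring care; everything else is essentially a transcription of the graph-feedback argument.
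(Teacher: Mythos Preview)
Your plan is the paper's proof: two equiprobable contexts, with context~1 carrying a zero-feature action that is provably optimal under the prior together with informative-but-suboptimal hypercube features (so conditional IDS, seeing $a_1^*$ deterministic, greedily plays the zero action there and learns nothing), and context~2 hosting a hard sparse instance that then forces $\Omega(\sqrt{nds})$ via the divergence-decomposition route. The paper dispatches your ``main obstacle'' by taking context~2 to be the multi-task bandit of \citet[Theorem~24.3]{lattimore2020bandit} with $d=sp$ blocks, which automatically sits in the $\sqrt{sdn}$ regime with gap $\Delta=\Theta(\sqrt{d/(sn)})$ (not your $\Theta(\sqrt{ds/n})$), and handles context~1 via an extra $(d{+}1)$th coordinate fixed in $\theta$ rather than your shift by $-2\mathbf{1}$; otherwise the arguments coincide.
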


\paragraph{How about the principle of optimism?} \citet[Section 4]{hao2021information} has shown that even for non-contextual case, optimism-based algorithms such as UCB or Thompson sampling fail to optimally balance information and regret and result in a sub-optimal regret bound.

\subsection{Upper bound achieved by contextual IDS} 
In this section, we will prove that contextual IDS can achieve a nearly dimension-free regret bound.
We say that the feature set has sparse optimal actions if the optimal action is $s$-sparse for each context almost surely with respect to the prior.

\begin{lemma}[Bounds for information ratio]\label{lemma:information_ratio_sparse}
We have $ \cI_{0,2}\leq d/2$. When the feature set has sparse optimal actions, we have $\cI_{0,3}\leq s^2/(4C_{\min}(\phi, \xi) ).
$
\end{lemma}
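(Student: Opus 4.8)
The plan is to bound the worst-case MIR by the MIR of a well-chosen reference policy, just as in the graph-feedback section. For the squared ratio $\cI_{0,2}$, I would take Thompson sampling as the reference policy, i.e. $\pi^{\TS}(a\mid s) = \mathbb P_t(\pi^*(s) = a)$. The key algebraic identity is Russo--Van Roy's decomposition of the regret of Thompson sampling in terms of the mutual information between $\pi^*$ and the observation. Concretely, for a fixed context $s$ one has $\Delta_t(s)^\top \pi^{\TS}(\cdot\mid s) = \sum_a \mathbb P_t(\pi^*(s)=a)\big(\mathbb E_t[\phi(s,a^*)^\top\theta^*\mid s] - \mathbb E_t[\phi(s,a)^\top\theta^*]\big)$, and the standard linear-bandit argument (see \citet{russo2018learning}) shows that at each context the conditional squared information ratio of Thompson sampling is at most $d/2$; the crucial point is that this bound is pointwise in $s$, hence survives taking the expectation over $s\sim\xi$ in both numerator and denominator of the MIR. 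Taking the expectation and using Jensen/Cauchy--Schwarz to pass from the per-context bound to the marginal quantities gives $\cI_{0,2}\le d/2$. Then invoke the definition of $\cI_{\alpha,\lambda}$ as the minimum over $\pi\in\Pi$ of $\Psi_{t,\alpha}^\lambda$, so the Thompson-sampling policy certifies the bound.

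For the cubic ratio $\cI_{0,3}$ under the sparse-optimal-actions assumption, I would mix Thompson sampling with the explorability-maximizing policy $\pi^{\exp}$ achieving $C_{\min}(\phi,\xi)$ in Definition 5.7, say $\mu = (1-w)\pi^{\TS} + w\,\pi^{\exp}$ for a weight $w$ to be optimized. The numerator's regret term is controlled by $\mathbb E_{s}[\Delta_t(s)^\top\mu(\cdot\mid s)] \le (1-w)\,\mathbb E_s[\Delta_t(s)^\top\pi^{\TS}(\cdot\mid s)] + w R_{\max}$, actually since $\alpha=0$ we just need the raw regret; the sparse-optimal-actions hypothesis enters to show the Thompson-sampling regret term is $O(s/\sqrt{\cdot})$ per round when weighed against the information gain. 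For the denominator I would lower bound $\mathbb E_s[\mI_t(\pi^*)^\top\mu(\cdot\mid s)]$ by the information contributed by the exploration component: playing $\pi^{\exp}$ with probability $w$ injects a covariance of size $wC_{\min}(\phi,\xi)$ into the posterior, and since the optimal policy is $s$-sparse this translates (via a standard "information $\ge$ covariance-weighted squared gap" inequality, e.g. the argument in \citet{hao2021information}) into an information gain of order $w\,C_{\min}(\phi,\xi)\,\|\Delta\|^2/s$. Balancing the cube of the regret against this information gain and optimizing $w$ yields the claimed $\cI_{0,3}\le s^2/(4C_{\min}(\phi,\xi))$.

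The main obstacle I anticipate is the cubic bound: making precise the step that converts "exploration injects eigenvalue $C_{\min}$" into a lower bound on $\mI_t(\pi^*)$ rather than on $\mI_t(\theta^*)$. The information gain in the MIR is with respect to the optimal \emph{policy} $\pi^*$, not the parameter, so one cannot directly use a Gaussian/linear-algebra computation on the posterior covariance of $\theta^*$; one needs the sparsity of the optimal actions to argue that learning enough about $\theta^*$ in the relevant $s$-dimensional directions suffices to identify $\pi^*$, and to get the constant $1/4$ right one must track how the rate-distortion-style argument interacts with the mixing weight. I would handle this by first reducing, via the chain rule and data-processing, to a parameter-information lower bound on the subspace spanned by the realized sparse supports, then invoking the explorability constant on that subspace. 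The squared-ratio bound $\cI_{0,2}\le d/2$ is comparatively routine, being essentially the contextual lift of the classical linear-bandit information-ratio bound with the per-context argument applied uniformly.
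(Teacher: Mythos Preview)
Your approach to the squared ratio $\cI_{0,2}$ is correct and matches the paper: bound the MIR by that of Thompson sampling, use data processing $\mI_t(\pi^*;\cdot)\geq \mI_t(a_t^*;\cdot)$, apply Jensen/Cauchy--Schwarz to reduce the marginal ratio to an expected per-context ratio, and invoke the linear-bandit trace argument of Russo--Van~Roy.

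For the cubic ratio your high-level structure---mixing TS with the explorability-maximizing policy and optimizing the weight---is also what the paper does, but the step you flag as the main obstacle is exactly where your plan breaks. You propose to lower-bound $\mI_t(\pi^*;O)$ by ``reducing, via the chain rule and data-processing, to a parameter-information lower bound on the subspace spanned by the realized sparse supports''; but data processing runs the other way here ($\pi^*$ is a function of $\theta^*$, so $\mI_t(\pi^*;O)\leq \mI_t(\theta^*;O)$), and no chain-rule manoeuvre will reverse that inequality. The paper instead goes \emph{down} to a coarser variable: since $a_t^*=\pi^*(s_t)$ is determined by $\pi^*$ given $s_t$, one has $\mI_t(\pi^*;Y_{t,a})\geq \mI_t(a_t^*;Y_{t,a})$, and the sub-Gaussian/Pinsker bound applied directly to $\mI_t(a_t^*;Y_{t,a})$ yields
\[
\mathbb E_{s_t}\bigl[\mI_t(a_t^*)^\top\pi_t^{\mix}(\cdot\mid s_t)\bigr]\;\gtrsim\;\epsilon\,C_{\min}(\phi,\xi)\sum_{a'}\mathbb P_t(a_t^*=a')\,\bigl\|\mathbb E_t[\theta^*\mid a_t^*=a']-\mathbb E_t[\theta^*]\bigr\|_2^2,
\]
with \emph{no} factor of $s$ on the information side. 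The sparse-optimal-actions hypothesis enters only in the \emph{regret} upper bound: the TS regret at context $s_t$ equals $\sum_a \mathbb P_t(a_t^*=a)\,\phi(s_t,a)^\top\bigl(\mathbb E_t[\theta^*\mid a_t^*=a]-\mathbb E_t[\theta^*]\bigr)$, and because every action with positive posterior-optimality mass has an $s$-sparse feature vector, $\bigl(\phi(s_t,a)^\top v\bigr)^2\leq s^2\|v\|_2^2$. The same $\ell_2$ quantity now appears on both sides, and optimizing $\epsilon$ gives $s^2/(4C_{\min})$. In particular, your claimed ``information gain of order $wC_{\min}\|\Delta\|^2/s$'' has the $s$-factor on the wrong side of the ratio, and the proposed sparse-subspace reduction is neither needed nor valid as stated.
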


From the definition of $\pi^*(m) = \argmin_{a\in\cA^m} a^{\top}\theta^*$, we know that $\pi^*$ is a deterministic function of $\theta^*$. By the data processing lemma, we have $\mI(\pi^*;\cF_{n+1})\leq \mI(\theta^*;\cF_{n+1})$. According to Lemma 5.8 in \citet{hao2021information}, we have the following lemma.
\begin{lemma}\label{lemma:cum_sparse}
The cumulative information gain can be bounded by
 \begin{equation*}
     \mathbb E\left[\sum_{t=1}^n\mathbb E_t\left[\mI_t(\pi^*)^{\top}\pi_t(\cdot|s_t)\right]\right] \leq 2s\log(dn^{1/2}/s)\,.
\end{equation*}
\end{lemma}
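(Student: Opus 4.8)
The plan is to reduce the cumulative information gain of contextual IDS to a mutual information between the optimal policy $\pi^*$ and the full history, and then invoke the data processing inequality together with the sparse rate-distortion style bound from \citet[Lemma 5.8]{hao2021information}. First I would use the chain rule of mutual information. Since the agent sees $s_t$ before acting, the per-round term $\mathbb E_t[\mI_t(\pi^*)^{\top}\pi_t(\cdot|s_t)]$ should be identified with the conditional mutual information $\mI(\pi^*; O_{t,A_t}\mid \cF_t, s_t, A_t)$, or more precisely with $\mI_t(\pi^*; (s_t,A_t,O_{t,A_t}))$ after averaging over the context and the (possibly randomized) action choice; the key point is that $s_t$ and $A_t$ carry no information about $\pi^*$ given $\cF_t$ in the sense needed, because contexts are drawn independently of $\theta^*$ and $\pi_t$ is $\cF_t$-measurable, so the only informative component is $O_{t,A_t}$. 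Summing over $t$ and applying the chain rule telescopes to $\mI(\pi^*; \cF_{n+1}) = \sum_{t=1}^n \mathbb E[\mI_t(\pi^*; (s_t,A_t,O_{t,A_t}))]$, which upper bounds $\mathbb E[\sum_t \mathbb E_t[\mI_t(\pi^*)^{\top}\pi_t(\cdot|s_t)]]$.

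Next I would apply the data processing inequality. Since $\pi^*$ is a deterministic function of $\theta^*$ (as noted in the text just before the lemma), $\pi^* \to \theta^* \to \cF_{n+1}$ is not the right order; rather $\theta^* \to \pi^*$ and $\cF_{n+1}$ depends on $\theta^*$, so $\mI(\pi^*;\cF_{n+1}) \leq \mI(\theta^*;\cF_{n+1})$ by data processing applied to the Markov chain $\cF_{n+1} \to \theta^* \to \pi^*$. Finally, I would cite \citet[Lemma 5.8]{hao2021information}, which bounds $\mI(\theta^*; \cF_{n+1})$ for an $s$-sparse prior on $\Theta$ by $2s\log(dn^{1/2}/s)$ (this is the standard sparse metric-entropy / covering-number argument: discretize $\Theta$ at scale $\sim 1/\sqrt{n}$, bound the number of $s$-sparse grid points by $(d/s)^s$ times a polynomial factor, and control the resolution loss against the sub-Gaussian noise), which delivers exactly the claimed bound.

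The main obstacle — and the only step requiring genuine care — is the first one: justifying the identity between the contextual-IDS per-round information gain $\mathbb E_t[\mI_t(\pi^*)^{\top}\pi_t(\cdot|s_t)]$ and the conditional mutual information $\mI(\pi^*; \cdot \mid \cF_t)$ that telescopes. One has to be careful that $[\mI_t(\pi^*)]_a = \mI_t(\pi^*; O_{t,a})$ is the information gain of a \emph{fixed} action $a$, and the linearity of the map $a \mapsto \mI_t(\pi^*;O_{t,a})$ against $\pi_t(\cdot|s_t)$ is \emph{not} in general the conditional mutual information $\mI_t(\pi^*; O_{t,A_t} \mid A_t, s_t)$ unless one averages correctly over $A_t \sim \pi_t(\cdot|s_t)$ and $s_t\sim\xi$ — which it is, by definition of conditional mutual information as an average of KL divergences. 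Once that bookkeeping is done, the chain rule applies verbatim as in the non-contextual IDS analysis, and the remaining steps are immediate invocations of data processing and the cited lemma. I would therefore spend most of the write-up carefully setting up the conditional-mutual-information identity and then refer to \citet[Lemma 5.8]{hao2021information} for the sparse bound without reproducing its proof.
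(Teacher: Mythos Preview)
Your proposal is correct and follows essentially the same route as the paper: the paper also reduces the cumulative information gain to $\mI(\pi^*;\cF_{n+1})$ via the chain-rule identity (this is exactly the content of Lemma~\ref{lemma:cumulative_ig} and its proof in Appendix~\ref{proof:cumulative_ig}), then invokes data processing to pass to $\mI(\theta^*;\cF_{n+1})$ since $\pi^*$ is a deterministic function of $\theta^*$, and finally cites \citet[Lemma~5.8]{hao2021information} for the $2s\log(dn^{1/2}/s)$ bound. Your emphasis on the bookkeeping needed to identify $\mathbb E_{s_t}[\mI_t(\pi^*)^{\top}\pi_t(\cdot|s_t)]$ with $\mI_t(\pi^*;(s_t,A_t,O_t))$ is exactly what the paper does in the proof of Lemma~\ref{lemma:cumulative_ig}.
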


Combining the results in Lemmas \ref{lemma:information_ratio_sparse}-\ref{lemma:cum_sparse} with the generic regret bound in  Theorem \ref{thm:generic_bound_ids}, we obtain the following regret bound.
\begin{theorem}[Regret bound for sparse contextual IDS]\label{thm:IDS}
Suppose $\pi^{\text{MIR}} = (\pi_{t})_{t\in \mathbb N}$ where $\pi_{t} = \argmin_{\pi}\Psi_{t,0}^{2}(\pi)$. 
When the feature set has sparse optimal actions, the following regret bound holds
\begin{equation*}
\begin{split}
     &\BR(n;\pi^{\text{MIR}}) \lesssim\\ &\min\Big\{ \sqrt{nds\log(dn^{1/2}/s)},\frac{sn^{\frac{2}{3}}(\log(dn^{1/2}/s))^{\tfrac{1}{3}}}{(C_{\min}(\phi, \xi) )^{\frac{1}{3}}}\Big\}\,.
\end{split}
\end{equation*}
\end{theorem}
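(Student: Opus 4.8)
The plan is to combine the generic regret bound of Theorem~\ref{thm:generic_bound_ids} with the information-ratio bounds of Lemma~\ref{lemma:information_ratio_sparse} and the cumulative information-gain bound of Lemma~\ref{lemma:cum_sparse}. Recall that Theorem~\ref{thm:generic_bound_ids} applies to the policy $\pi^{\text{MIR}}$ that minimizes $(\alpha,2)$-MIR at each round; here $\alpha=0$, so the leading $n\alpha$ term vanishes and we are left with
\begin{equation*}
    \BR(n;\pi^{\text{MIR}}) \leq \inf_{\lambda\geq 2} 2^{1-2/\lambda}\,\cI_{0,\lambda}^{1/\lambda}\, n^{1-1/\lambda}\, \left(\mathbb E\Big[\sum_{t=1}^n\mathbb E_{s_t}\big[\mI_t(\pi^*)^\top\pi_t(\cdot|s_t)\big]\Big]\right)^{1/\lambda}\,.
\end{equation*}
By Lemma~\ref{lemma:cum_sparse}, the cumulative information gain is at most $2s\log(dn^{1/2}/s)$ regardless of $\lambda$, so this quantity can be substituted directly into the bound.

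The next step is simply to evaluate the right-hand side at the two values $\lambda=2$ and $\lambda=3$, which are exactly the two values for which Lemma~\ref{lemma:information_ratio_sparse} supplies bounds. For $\lambda=2$: using $\cI_{0,2}\leq d/2$ and the cumulative information gain bound, we get $\BR(n;\pi^{\text{MIR}}) \lesssim \sqrt{d}\cdot n^{1/2}\cdot\sqrt{s\log(dn^{1/2}/s)} = \sqrt{nds\log(dn^{1/2}/s)}$, absorbing absolute constants. For $\lambda=3$: using $\cI_{0,3}\leq s^2/(4C_{\min}(\phi,\xi))$ (which requires the sparse-optimal-action hypothesis, already assumed in the theorem statement), we get $\BR(n;\pi^{\text{MIR}}) \lesssim (s^2/C_{\min}(\phi,\xi))^{1/3}\cdot n^{2/3}\cdot (s\log(dn^{1/2}/s))^{1/3} = s\, n^{2/3}(\log(dn^{1/2}/s))^{1/3}/(C_{\min}(\phi,\xi))^{1/3}$. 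Taking the minimum over these two choices of $\lambda$ (legitimate since the infimum in Theorem~\ref{thm:generic_bound_ids} is over all $\lambda\geq 2$, and a fortiori is bounded by the min of any two such values) yields precisely the claimed bound.

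There is essentially no hard step here — the theorem is a corollary, and the entire content lives in the three lemmas it cites. The only mild subtlety worth stating carefully is that Theorem~\ref{thm:generic_bound_ids} is phrased for the $\lambda=2$-minimizing policy yet its conclusion involves an infimum over $\lambda\geq 2$; this is the ``adaptivity'' remark following that theorem, and it is what lets a single algorithm (minimizing squared MIR) simultaneously enjoy the $\sqrt{n}$ and the $n^{2/3}$ guarantees. One should also note that the $C_{\min}$-dependent branch is only valid under the sparse-optimal-actions assumption, so the final $\min\{\cdot,\cdot\}$ statement is exactly as strong as that hypothesis permits; if the hypothesis fails, only the $\sqrt{nds\log(dn^{1/2}/s)}$ branch survives (via $\cI_{0,2}\leq d/2$, which is unconditional).
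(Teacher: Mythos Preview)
Your proposal is correct and mirrors the paper's own argument exactly: the paper simply states that Theorem~\ref{thm:IDS} follows by combining Lemmas~\ref{lemma:information_ratio_sparse}--\ref{lemma:cum_sparse} with the generic bound of Theorem~\ref{thm:generic_bound_ids}, and you have spelled out precisely that combination (instantiating $\alpha=0$, evaluating at $\lambda=2$ and $\lambda=3$, and taking the minimum). There is no additional content in the paper's treatment beyond what you wrote.
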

In the data-poor regime where $d\gtrsim n^{1/3}s/C_{\min}^{2/3}$, contextual IDS achieves $\tilde{O}(sn^{2/3})$ regret bound that is tighter than the lower bound for conditional IDS. This rate matches the minimax lower bound derived in \citet{hao2020high} up to a $s$ factor in the data-poor regime.

\section{Practical Algorithm}
As shown in Section \ref{sec:conditional_IDS}, Conditional IDS only needs to optimize over probability distributions.
As proved by \citet[Proposition 6]{russo2018learning}, conditional IDS suffices to randomize over two actions at each round and the computational complexity is further improved to $O(|\cA|)$ by \citet[Lemma 2.7]{kirschner2021information}. However, contextual IDS in Section \ref{sec:contextual_IDS} needs to optimize over a mapping from the context space to action space at each round which in general is much more computationally demanding.

To obtain a practical and scalable algorithm, we approximate the contextual IDS using Actor-Critic \citep{konda2000actor}. We parametrize the policy class by a neural network and optimize the information ratio by multi-steps stochastic gradient decent. 

Consider a parametrized policy class $\Pi_{\theta}=\{\pi_{\theta}:\cS\to \cP(\cA)\}$. The policy, which is a conditional probability distribution $\pi_{\theta}$, can be parameterized with a neural network. This
neural network maps (deterministically) from a context $s$ to a probability distribution over $\cA$. We further parametrize the critic which is the reward function by a value network $Q_{\theta}$.

To avoid additional approximation errors, we assume we can sample from the true context distribution. 
At each round, the parametrized contextual IDS minimizes the following empirical MIR loss through SGD: 
\begin{equation}\label{def:MIR}
   \min_{\pi\in\Pi_{\theta}}  \frac{ \sum_{i=1}^{w}[\Delta_t(s_t^{(i)})^{\top}\pi(\cdot|s_t^{(i)})]^{2}}{\sum_{i=1}^w[\mI_t(\pi^*)^{\top}\pi(\cdot|s_t^{(i)})]}\,,
\end{equation}
where $\{s_t^{(1)},\ldots, s_t^{(w)}\}$ are the independent samples of contexts. We use the Epistemic Neural Networks (ENN) \citep{osband2021epistemic} to quantify the posterior uncertainty of the value network. For each given $s_t^{(i)}$, following the procedure in Section 6.3.3 and 6.3.4 of \citet{lu2021reinforcement}, we can approximate the one-step expected regret and the information gain efficiently using the samples outputted by ENN.

\section{Experiment}
We conduct some preliminary experiments to evaluate the parametrized contextual IDS through a neural network contextual bandit. 

At each round $t$, the environment independently generates an observation in the form of $d$-dimensional contextual vector $x_t$ from some distributions. Let $f_{\theta^*}: \mathbb R^d \to \mathbb R^{|\cA|}$ be a neural network link function. When the agent takes action $a$, she will receive a reward in the form of
$Y_{t,a} = [f_{\theta^*}(x_t)]_a+\eta_{t,a}.$
This is the not sharing parameter formulation for contextual bandits which means each arm has its own parameter. 

We set the generative model $f_{\theta^*}$ being a 2-hidden-layer ReLU MLP with 10 hidden neurons. The number of actions is 5. The contextual vector $x_t\in\mathbb R^{10}$ is sampled from $N(0, I_{10})$ and the noise is sampled from standard Gaussian distribution. 

We compare contextual IDS with conditional IDS and Thompson sampling. For a fair comparison, we use the same ENN architecture to obtain posterior samples for three agents. As reported by \citet{osband2021evaluating}, ensemble sampling with randomized prior function tends to be the best ENN that balances the computation and accuracy so we use 10 ensembles in our experiment. With 200 posterior samples, we use the same way described by \citet{lu2021reinforcement} to approximate the one-step regret and information gain for both conditional IDS and contextual IDS. We sample 20 independent contextual vectors at each round. Both the policy network and value network are using 2-hidden-layer ReLU MLP with 20 hidden neurons and optimized by Adam with learning rate 0.001. We report our results in Figure \ref{fig:nn} where parametrized contextual IDS achieves reasonably good regret.
    \begin{figure}
 \centering
 \includegraphics[width=0.8\linewidth]{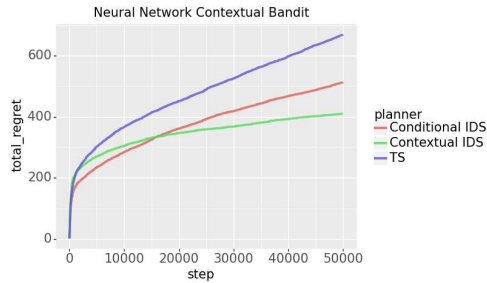}
\caption{Cumulative regret for a neural network contextual bandit.}
\label{fig:nn}
\end{figure}
\section{Discussion and Future Work}
In this work, we investigate the right form of information ratio for contextual bandits and emphasize the importance of utilizing context distribution information through contextual bandits with graph feedback and sparse linear contextual bandits.
For linear contextual bandits with moderately small number of actions, one future work is to see if contextual IDS can achieve $O(\sqrt{dn\log(k)})$ regret bound.

\bibliography{ref}
\bibliographystyle{icml2022}
\newpage
\appendix
\onecolumn
 \section{General Results}
 \subsection{Proof of Theorem \ref{thm:generic_bound_ids}}\label{sec:proof_generic_bound}
 \begin{proof}
 From the definitions in Eqs.~\eqref{eqn:bayesian_regret} and \eqref{def:one_step_regret}, we could decompose the Bayesian regret bound of contextual IDS as
 \begin{equation}\label{eqn:bayesian_regret_decom}
     \BR(n; \pi^{\text{MIR}}) =n\alpha+ \mathbb E\left[\sum_{t=1}^n\mathbb E_{s_t}\left[(\Delta_t(s_t)-\alpha)^{\top}\pi_t(\cdot|s_t)\right]\right]\,.
 \end{equation}
 Recall that at each round contextual IDS follows
 \begin{equation*}
     \pi_t = \argmin_{\pi:\cS\to \cP(\cA)} \Psi_{t, \alpha}^{2}(\pi) =  \argmin_{\pi:\cS\to \cP(\cA)}\frac{ \max\left(0, (\mathbb E_{s_t}[\Delta_t(s_t)^{\top}\pi(\cdot|s_t)]-\alpha)\right)^{2}}{\mathbb E_{s_t}[\mI_t(\pi^*)^{\top}\pi(\cdot|s_t)]}\,.
 \end{equation*}
 Moreover, we define
 \begin{equation*}
     q_{t, \lambda}=\argmin_{\pi:\cS\to \cP(\cA)} \Psi_{t, \alpha}^{\lambda}(\pi) =  \argmin_{\pi:\cS\to \cP(\cA)}\frac{ \max\left(0, (\mathbb E_{s_t}[\Delta_t(s_t)^{\top}\pi(\cdot|s_t)]-\alpha)\right)^{\lambda}}{\mathbb E_{s_t}[\mI_t(\pi^*)^{\top}\pi(\cdot|s_t)]}\,.
 \end{equation*}
 Suppose for a moment that $\mathbb E_{s_t}[\Delta_t(s_t)^{\top}\pi_t(\cdot|s_t)]-\alpha>0$. Let $M$ be the number of contexts. Then the derivative can be written as 
 \begin{equation*}
     \nabla_{\pi(\cdot|m)}\Psi_{t,\alpha}^{2}(\pi_t) = \frac{2p_m\Delta_t(m)(\mathbb E_{s_t}[\Delta_t(s_t)^{\top}\pi(\cdot|s_t)]-\alpha)}{\mathbb E_{s_t}[\mI_t(\pi^*)^{\top}\pi(\cdot|s_t)]}-\frac{p_m\mI_t(m)\left(\mathbb E_{s_t}[\Delta_t(s_t)^{\top}\pi(\cdot|s_t)]-\alpha)\right)^2}{\left(\mathbb E_{s_t}[\mI_t(\pi^*)^{\top}\pi(\cdot|s_t)]\right)^2}\,,
 \end{equation*}
 where $p_m$ is the arrival probability for context $m$. Using the first-order optimality condition for each $m\in[M]$,
 \begin{equation*}
     \mathbb E_{s_t}\left[\langle \nabla_{\pi(\cdot|s_t)}\Psi_{t,2}(\pi_t), q_{t,\lambda}(\cdot|s_t)-\pi_t(\cdot|s_t)\rangle\right]=\sum_{m=1}^M p_m \langle \nabla_{\pi(\cdot|m)}\Psi_{t,\alpha}^2(\pi_t), q_{t,\lambda}(\cdot|m)-\pi_t(\cdot|m)\rangle\geq 0\,.
 \end{equation*}
 This implies 
 \begin{equation*}
 \begin{split}
&\frac{2\mathbb E_{s_t}\left[\Delta_t(s_t)^{\top}(q_{t,\lambda}(\cdot|s_t)-\pi_t(\cdot|s_t))\right](\mathbb E_{s_t}[\Delta_t(s_t)^{\top}\pi_t(\cdot|s_t)]-\alpha)}{\mathbb E_{s_t}[\mI_t(\pi^*)^{\top}\pi_t(\cdot|s_t)]}\\
&\geq \frac{\mathbb E_{s_t}\left[\mI_t(\pi^*)^{\top}(q_{t,\lambda}(\cdot|s_t)-\pi_t(\cdot|s_t))\right]\left(\mathbb E_{s_t}[\Delta_t(s_t)^{\top}\pi_t(\cdot|s_t)]-\alpha)\right)^2}{\left(\mathbb E_{s_t}[\mI_t(\pi^*)^{\top}\pi_t(\cdot|s_t)]\right)^2} \,.
 \end{split}
 \end{equation*}
 Since we assume $\mathbb E_{s_t}[\Delta_t(s_t)^{\top}(\pi_t(\cdot|s_t)]-\alpha>0$ and the information gain is always non-negative, we have 
 \begin{equation*}
     \begin{split}
     2\mathbb E_{s_t}\left[\Delta_t(s_t)^{\top}(q_{t,\lambda}(\cdot|s_t)-\pi_t(\cdot|s_t))\right]\geq \frac{\mathbb E_{s_t}\left[\mI_t(\pi^*)^{\top}(q_{t,\lambda}(\cdot|s_t)-\pi_t(\cdot|s_t))\right]\mathbb E_{s_t}[\Delta_t(s_t)^{\top}\pi_t(\cdot|s_t)]-\alpha)}{\mathbb E_{s_t}[\mI_t(\pi^*)^{\top}\pi_t(\cdot|s_t)]}\,.
     \end{split}
 \end{equation*}
 which implies
  \begin{equation*}
     \begin{split}
     2\left(\mathbb E_{s_t}[\Delta_t(s_t)^{\top}q_{t,\lambda}(\cdot|s_t)]-\alpha\right)&\geq \left(\mathbb E_{s_t}[\Delta_t(s_t)^{\top}\pi_t(\cdot|s_t)]-\alpha\right)\left(1+\frac{\mathbb E_{s_t}[\mI_t(\pi^*)^{\top}q_{t,\lambda}(\cdot|s_t)]}{\mathbb E_{s_t}[\mI_t(\pi^*)^{\top}\pi_t(\cdot|s_t)]}\right)\\
     &\geq\mathbb E_{s_t}[\Delta_t(s_t)^{\top}\pi_t(\cdot|s_t)]-\alpha\,.
     \end{split}
 \end{equation*}
 
 Putting the above results together,
 \begin{equation*}
     \begin{split}
         \frac{(\mathbb E_{s_t}[\Delta_t(s_t)^{\top}\pi_t(\cdot|s_t)]-\alpha)^{\lambda}}{\mathbb E_{s_t}[\mI_t(\pi^*)^{\top}\pi(\cdot|s_t)]} &=  \frac{(\mathbb E_{s_t}[\Delta_t(s_t)^{\top}\pi_t(\cdot|s_t)]-\alpha)^{2}(\mathbb E_{s_t}[\Delta_t(s_t)^{\top}\pi(\cdot|s_t)]-\alpha)^{\lambda-2}}{\mathbb E_{s_t}[\mI_t(\pi^*)^{\top}\pi(\cdot|s_t)]} \\
         &\leq \frac{2^{\lambda-2}(\mathbb E_{s_t}[\Delta_t(s_t)^{\top}\pi_t(\cdot|s_t)]-\alpha)^{2}(\mathbb E_{s_t}[\Delta_t(s_t)^{\top}q_{t,\lambda}(\cdot|s_t)]-\alpha)^{\lambda-2}}{\mathbb E_{s_t}[\mI_t(\pi^*)^{\top}\pi_t(\cdot|s_t)]}\\
         & \leq \frac{2^{\lambda-2}(\mathbb E_{s_t}[\Delta_t(s_t)^{\top}q_{t,\lambda}(\cdot|s_t)]-\alpha)^{2}(\mathbb E_{s_t}[\Delta_t(s_t)^{\top}q_{t,\lambda}(\cdot|s_t)]-\alpha)^{\lambda-2}}{\mathbb E_{s_t}[\mI_t(\pi^*)^{\top}q_{t,\lambda}(\cdot|s_t)]}\\
         &= \frac{2^{\lambda-2}(\mathbb E_{s_t}[\Delta_t(s_t)^{\top}q_{t,\lambda}(\cdot|s_t)]-\alpha)^{\lambda}}{\mathbb E_{s_t}[\mI_t(\pi^*)^{\top}q_{t,\lambda}(\cdot|s_t)]}\,.
     \end{split}
 \end{equation*}
Recall that $\cI_{\alpha,\lambda}$ is the worst-case information ratio such that for any $t\in[n]$, $\Psi_{t,\alpha}^{\lambda}(\pi_t)\leq \cI_{\alpha,\lambda}$ almost surely. Therefore,
 \begin{equation}\label{eqn:decomp}
     \mathbb E_{s_t}\left[\Delta_t(s_t)^{\top}\pi_t(\cdot|s_t)-\alpha\right]\leq 2^{1-2/\lambda}\left(\mathbb E_{s_t}\left[\mI_t(\pi^*)^{\top}\pi_t(\cdot|s_t)\right]\right)^{1/\lambda}\cI_{\alpha,\lambda}^{1/\lambda}\,,
 \end{equation}
  which is obvious when $\mathbb E_{s_t}[\Delta_t(s_t)^{\top}(\pi_t(\cdot|s_t)]-\alpha\leq 0$.
Combining Eqs.~\eqref{eqn:bayesian_regret_decom} and \eqref{eqn:decomp} together and using Holder's inequality, we obtain
 \begin{equation*}
      \BR(n; \pi^{\text{IDS}})\leq n\alpha+2^{1-2/\lambda}\cI_{\alpha,\lambda}^{1/\lambda}n^{1-1/\lambda}\mathbb E\left[\sum_{t=1}^n\mathbb E_{s_t}\left[\mI_t(\pi^*)^{\top}\pi_t(\cdot|s_t)\right]\right]^{1/\lambda}\,.
 \end{equation*}
 This ends the proof.
\end{proof} 

\section{Contextual Bandits with Graph Feedback} 

\subsection{Proof of Lemma \ref{lemma:graph_ir_2}}\label{proof:graph_ir_2}
 \begin{proof}
We bound the squared information ratio in terms of independence number $\beta(\cG)$. Let $\pi_t^{\TS}(\cdot|s_t) = \mathbb P_t(a_t^*=\cdot)$ and consider a mixture policy $\pi_t^{\mix} = (1-\epsilon)\pi_t^{\TS}+\epsilon/k$ with some mixing parameter $\epsilon\in[0, 1]$ that will be determined later. 

First, we will derive an upper bound for one-step regret. From the definition of one-step regret,
 \begin{equation}\label{eqn:one_step_regret}
 \begin{split}
      \Delta_t(s_t)^{\top}\pi_t^{\mix}(\cdot|s_t) &= \sum_{a\in\cA_t}\pi_t^{\mix}(a|s_t)\mathbb E_t\left[Y_{t, a_t^*}-Y_{t, a}|s_t\right]\\
      &=(1-\epsilon)\sum_{a\in\cA_t}\pi_t^{\TS}(a|s_t)\mathbb E_t\left[Y_{t, a_t^*}-Y_{t, a}|s_t\right]+\epsilon\sum_{a\in\cA_t}\frac{1}{k}\mathbb E_t\left[Y_{t, a_t^*}-Y_{t, a}|s_t\right]\,.
 \end{split}
 \end{equation}
Recall that $R_{\max}$ is the almost surely upper bound of maximum expected reward and 
 let $$
 d_t(a, a') = D_{\KL}(\mathbb P_t(Y_{t,a}\in \cdot| a_t^*=a'||\mathbb P_t(Y_{t,a}\in\cdot))).
 $$ It is easy to see $Y_{t,a}$ is a $\sqrt{R_{\max}^2+1}$ sub-Gaussian random variable.
 For the first term of Eq.~\eqref{eqn:one_step_regret}, according to Lemma 3 in \citet{russo2014learning}, we have 
 \begin{equation}\label{eqn:bound1_graph}
     \sum_{a\in\cA_t}\pi_t^{\TS}(a|s_t)\mathbb E_t\left[Y_{t, a_t^*}-Y_{t, a}|s_t\right]\leq \sum_{a\in\cA_t}\pi_t^{\TS}(a|s_t)\sqrt{\frac{R_{\max}^2+1}{2}d_t(a, a)}\,.
 \end{equation}
 For the second term of Eq.~\eqref{eqn:one_step_regret}, we directly bound it by
 \begin{equation}\label{eqn:bound2_graph}
    \epsilon\sum_{a\in\cA_t}\frac{1}{k}\mathbb E_t\left[Y_{t, a_t^*}-Y_{t, a}|s_t\right]\leq 2\epsilon R_{\max}\,.
 \end{equation}
Putting Eqs.~\eqref{eqn:one_step_regret}-\eqref{eqn:bound2_graph} together, we have
\begin{equation*}
     \Delta_t(s_t)^{\top}\pi_t^{\mix}(\cdot|s_t)-2\epsilon R_{\max} \leq \sum_{a\in\cA_t}\pi_t^{\TS}(a|s_t)\sqrt{\frac{R_{\max}^2+1}{2}d_t(a, a)}\,. 
\end{equation*}
 
Second, we will derive a lower bound for the information gain. From data processing inequality and conditional on $s_t$, 
\begin{equation}\label{eqn:data_processing}
     \mI_t\left(\pi^*; (Y_{t,a'})_{a'\in N^{\text{out}}(a)}\right)\geq  \mI_t\left(a_t^*; (Y_{t,a'})_{a'\in N^{\text{out}}(a)}\right)\,.
\end{equation}
Let $E = (a_{ij})_{1\leq i,j \leq |\cA|}$ be the adjacency matrix that represents the graph feedback structure $\cG$. Then $a_{ij}=1$ if there exists an edge $(i, j)\in \cE$ and $a_{ij}=0$ otherwise.  Since for any $a_i, a_j\in N^{\text{out}}(a)$, $Y_{t,a_i}$ and $Y_{t, a_j}$ are mutually independent, 
\begin{equation*}
    \mI_t\left(a_t^*; (Y_{t,a'})_{a'\in N^{\text{out}}(a)}\right) \geq \sum_{a'\in N^{\text{out}}(a)}\mI_t(a_t^*; Y_{t,a'}) = \sum_{a'\in\cA_t}E(a, a')\mI(a_t^*; Y_{t,a'})\,.
\end{equation*}
Therefore,
 \begin{equation*}
 \begin{split}
      \mI_t(a_t^*)^{\top}\pi_t^{\mix}(\cdot|s_t)&\geq \sum_{a\in\cA_t} \mI_t\left(a_t^*; (Y_{t,a'})_{a'\in N^{\text{out}}(a)}\right)\pi_t^{\mix}(a|s_t)\\
      &\geq \sum_{a\in\cA_t}  \sum_{a'\in\cA_t}E(a, a')\mI_t(a_t^*; Y_{t,a'})\pi_t^{\mix}(a|s_t)\\
      &=\sum_{a\in\cA_t}\left(\pi_t^{\mix}(a|s_t)+\sum_{a'\in N^{\text{in}}(a)}\pi_t^{\mix}(a'|s_t)\right)\mI_t(a_t^*; Y_{t,a})\\
      &= \sum_{a\in\cA_t}\left(\pi_t^{\mix}(a|s_t)+\sum_{a'\in N^{\text{in}}(a)}\pi_t^{\mix}(a'|s_t)\right)\sum_{a'\in\cA_t}\pi_t^{\TS}(a'|s_t)d_t(a, a')\\
      &\geq \sum_{a\in\cA_t}\left(\pi_t^{\mix}(a|s_t)+\sum_{a'\in N^{\text{in}}(a)}\pi_t^{\mix}(a'|s_t)\right)\pi_t^{\TS}(a|s_t)d_t(a, a)\\
      &=\sum_{a\in\cA_t}\frac{\pi_t^{\mix}(a|s_t)+\sum_{a'\in N^{\text{in}}(a)}\pi_t^{\mix}(a'|s_t)}{\pi_t^{\TS}(a|s_t)}\pi_t^{\TS}(a|s_t)^2d_t(a,a)\,.
 \end{split}
 \end{equation*}
   Let's denote 
 \begin{equation*}
     U_{a,t}=\frac{\pi_t^{\mix}(a|s_t)+\sum_{a'\in N_{s_t}^{\text{in}}(a)}\pi_t^{\mix}(a'|s_t)}{\pi_t^{\TS}(a|s_t)}\,.
 \end{equation*}
 Putting the above together,
 \begin{equation*}
     \begin{split}
     \Delta_t(s_t)^{\top}\pi_t^{\mix}(\cdot|s_t)-2\epsilon R_{\max}&\leq \sqrt{\frac{R_{\max}^2+1}{2}}\sqrt{\sum_{a\in\cA_t}\frac{1}{ U_{a,t}}}\sqrt{\sum_{a\in\cA_t} U_{a,t}\pi_t^{\TS}(a|s_t)^2d_t(a,a)} \\
     &\leq \sqrt{\frac{R_{\max}^2+1}{2}}\sqrt{\sum_{a\in\cA_t}\frac{1}{ U_{a,t}}}\sqrt{\mI_t(a_t^*)^{\top}\pi_t^{\mix}(\cdot|s_t)}\,,
     \end{split}
 \end{equation*}
 where the first inequality is due to Cauchy–Schwarz inequality. This implies
 \begin{equation*}
     \begin{split}
         \frac{\left(\Delta_t(s_t)^{\top}\pi_t^{\mix}(\cdot|s_t)-2\epsilon R_{\max} \right)^2}{\mI_t(a_t^*)^{\top}\pi_t^{\mix}(\cdot|s_t)}&\leq \frac{R_{\max}^2+1}{2}\sum_{a\in\cA_t}\frac{\pi_t^{\TS}(a|s_t)}{\pi_t^{\mix}(a|s_t)+\sum_{a'\in N^{\text{in}}(a)}\pi_t^{\mix}(a'|s_t)}\\
         &\leq \frac{R_{\max}^2+1}{2(1-\epsilon)}\sum_{a\in\cA_t}\frac{\pi_t^{\mix}(a|s_t)}{\pi_t^{\mix}(a|s_t)+\sum_{a'\in N^{\text{in}}(a)}\pi_t^{\mix}(a'|s_t)}\,,
     \end{split}
 \end{equation*}
 where we use $\pi_t^{\mix}\geq (1-\epsilon)\pi_t^{\TS}$. Using Jenson's inequality and Eq.~\eqref{eqn:data_processing},
 \begin{equation*}
 \begin{split}
      \frac{(\mathbb E_{s_t}[\Delta_t(s_t)^{\top}\pi^{\mix}_t(\cdot|s_t)]-2\epsilon R_{\max})^2}{\mathbb E_{s_t}[\mI_t(\pi^*)^{\top}\pi_t^{\mix}(\cdot|s_t)] }&\leq\frac{(\mathbb E_{s_t}[\Delta_t(s_t)^{\top}\pi^{\mix}_t(\cdot|s_t)]-2\epsilon R_{\max})^2}{\mathbb E_{s_t}[\mI_t(a_t^*)^{\top}\pi_t^{\mix}(\cdot|s_t)] }\\
      &\leq \mathbb E_{s_t}\left[\frac{(\Delta_t(s_t)^{\top}\pi_t^{\mix}(\cdot|s_t)-\alpha)^2}{\mI_t(a_t^*)^{\top}\pi_t^{\mix}(\cdot|s_t) }\right]\\
     &\leq \mathbb E_{s_t}\left[\frac{R_{\max}^2+1}{2(1-\epsilon)}\sum_{a\in\cA_t}\frac{\pi_t^{\mix}(a|s_t)}{\pi_t^{\mix}(a|s_t)+\sum_{a'\in N^{\text{in}}(a)}\pi_t^{\mix}(a'|s_t)}\right]\,.
      \end{split}
 \end{equation*}
 
Next we restate Lemma 5 from \citet{alon2015online} to bound the right hand side.
\begin{lemma}\label{lemma:independence_number}
Let $\cG=(\cA, \cE)$ be a directed graph with $|\cA|=k$. Assume a distribution $\pi$ over $\cA$ such that $\pi(i)\geq \eta$ for all $i\in\cA$ with some constant $0<\eta<0.5$. Then we have
\begin{equation*}
    \sum_{i\in\cA}\frac{\pi(i)}{\pi(i)+\sum_{j\in N^{\text{in}}(i)}\pi(j)} \leq 4\beta(G)\log\left(\frac{4k}{\beta(\cG)\eta}\right)\,.
\end{equation*}
\end{lemma}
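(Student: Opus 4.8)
This lemma is a verbatim restatement of Lemma~5 of \citet{alon2015online}, so the most economical route is simply to invoke it; if one instead wants a self-contained argument, here is the plan I would follow. Write $P_i := \pi(i) + \sum_{j \in N^{\text{in}}(i)}\pi(j)$ for the mass of the closed in-neighbourhood of a vertex $i$, and observe that $\eta \le \pi(i) \le P_i \le 1$ for every $i \in \cA$, so in particular $k \le 1/\eta$. Using the layer-cake identity $1/P_i = \int_{P_i}^{\infty} t^{-2}\, dt$ and exchanging the (finite) sum with the integral gives
\begin{equation*}
  \sum_{i \in \cA} \frac{\pi(i)}{P_i} \;=\; \int_{0}^{\infty} \frac{\pi(S_t)}{t^{2}}\, dt, \qquad S_t := \{\, i \in \cA : P_i \le t \,\},
\end{equation*}
where $\pi(S_t) := \sum_{i \in S_t}\pi(i)$ is the mass of $S_t$; the integrand vanishes for $t < \eta$, and $\pi(S_t) \le 1$ always. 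An equivalent formulation bins the vertices into the dyadic bands $2^{-b-1} < P_i \le 2^{-b}$, of which only $O(\log(1/\eta))$ are nonempty.

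The heart of the matter is the combinatorial estimate $\pi(S_t) \le c\,\beta(\cG)\,t$ for an absolute constant $c$ (I would expect $c = 2$). To establish it I would fix $t$, pass to the induced subgraph $\cG[S_t]$, take a maximal independent set $I$ of $\cG[S_t]$ — so $|I| \le \beta(\cG[S_t]) \le \beta(\cG)$ — and, using maximality, charge the mass $\pi(i)$ of each $i \in S_t$ either to $i$ itself (if $i \in I$) or to a vertex of $I$ adjacent to $i$. Vertices charged to $j \in I$ along an \emph{incoming} edge $i \to j$ carry total mass at most $\pi(j) + \pi(N^{\text{in}}(j) \cap S_t) \le P_j \le t$. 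Vertices charged to $j$ along an \emph{outgoing} edge $j \to i$ are the delicate case, as their mass is not bounded by $P_j$; here I would control the multiplicity of such charges using the hypothesis $\pi(i) \ge \eta$ together with $|N^{\text{in}}(i)| \le t/\eta$ for $i \in S_t$, and redistribute so that each $j \in I$ still absorbs only $O(t)$ of additional mass.

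With $\pi(S_t) \le \min(1,\, c\,\beta(\cG)\,t)$ in hand, I would split the integral at $t_0 := 1/(c\,\beta(\cG))$: the band $[\eta, t_0]$ contributes $c\,\beta(\cG)\log(t_0/\eta)$ and the band $[t_0, \infty)$ contributes $1/t_0 = c\,\beta(\cG)$ (a one-line case analysis dispatches the degenerate situation $t_0 \le \eta$), and then use $\beta(\cG) \le k \le 1/\eta$ and $\eta < 1/2$ to absorb the additive and constant-factor slack into the stated $4\,\beta(\cG)\log(4k/(\beta(\cG)\eta))$. The step I expect to be the main obstacle is the out-edge charging inside the combinatorial estimate: only in-neighbourhood mass feeds into $P_i$, whereas a maximal independent set of $\cG[S_t]$ also reaches vertices via out-edges, so controlling those contributions genuinely needs the lower bound $\pi(i) \ge \eta$; everything after $\pi(S_t) = O(\beta(\cG)t)$ — in particular pinning down the constant $4$ and the exact logarithm — is routine.
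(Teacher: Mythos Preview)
Your first sentence is exactly what the paper does: it introduces the lemma with ``Next we restate Lemma~5 from \citet{alon2015online}'' and gives no proof beyond that citation. So simply invoking the result is both correct and identical to the paper's treatment.

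The self-contained sketch you add goes beyond the paper. Its level-set decomposition and the target estimate $\pi(S_t)\le c\,\beta(\cG)\,t$ would indeed deliver the bound, but the out-edge charging step you yourself flag as ``the main obstacle'' is a real gap, and the hint you offer does not close it: knowing $|N^{\text{in}}(i)|\le t/\eta$ bounds how many vertices of $I$ can point \emph{into} a given $i$, whereas what you need is a bound on $\sum_{i\in N^{\text{out}}(j)\cap S_t}\pi(i)$ for each fixed $j\in I$, and nothing in your argument controls that quantity. The original proof in \citet{alon2015online} sidesteps this issue via a direct greedy construction---iteratively peel off a vertex together with its in-neighbourhood while tracking the partial sum---rather than a level-set mass bound, so if you want a self-contained version it is safer to follow that route.
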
 
 
Using Lemma \ref{lemma:independence_number} with $\eta=\epsilon/k$,
 \begin{equation*}
 \begin{split}
    \frac{(\mathbb E_{s_t}[\Delta_t(s_t)^{\top}\pi^{\mix}_t(\cdot|s_t)]-2\epsilon R_{\max})^2}{\mathbb E_{s_t}[\mI_t(\pi^*)^{\top}\pi_t^{\mix}(\cdot|s_t)] }&\leq \frac{2R_{\max}^2+2}{1-\epsilon}2\beta(\cG)\log\left(\frac{4k^2}{\beta(\cG)\alpha}\right)\,.
     \end{split}
 \end{equation*}
According to the definition of contextual IDS, this ends the proof.
\end{proof}

\subsection{Proof of Lemma \ref{lemma:graph_ir_3}}\label{proof:graph_ir_3}
\begin{proof}
We bound the worst-case marginal information ratio with $\lambda=3$. Define an exploratory policy $\mu:\cS\to \cA$ such that 
\begin{equation*}
    \mu = \argmax_{\pi}\min_d\mathbb E_{s, a\sim\pi(\cdot|s)}\left[\ind\left\{d\in N^{\text{out}}(a)\right\}\right]\,.
\end{equation*}
Consider a mixture policy $\pi_t^{\mix} = (1-\epsilon)\pi_t^{\TS}+\epsilon\mu$ for some mixing parameter $\epsilon\in[0, 1]$. Since for any $a_i, a_j\in N^{\text{out}}(a)$, $Y_{t,a_i}$ and $Y_{t, a_j}$ are mutually independent, we have
 \begin{equation*}
 \begin{split}
     \mathbb E_{s_t}\left[\mI_t(\pi^*)^{\top}\pi_t^{\mix}(\cdot|s_t)\right]&= \mathbb E_{s_t, a\sim\pi_t^{\mix}(\cdot|s_t)}\left[\mI_t\left(\pi^*;(Y_{t,a'})_{a'\in  N^{\text{out}}(a)}\right)\right]\\
     &\geq \mathbb E_{s_t, a\sim\pi_t^{\mix}(\cdot|s_t)}\left[\mI_t\left(a_t^*;(Y_{t,a'})_{a'\in  N^{\text{out}}(a)}\right)\right]\\
     &\geq
     \mathbb E_{s_t, a\sim\pi_t^{\mix}(\cdot|s_t)}\left[\sum_{a'\in N^{\text{out}}(a)}\mI_t\left(a_t^*;Y_{t,a'}\right)\right]\,,
\end{split}
\end{equation*}
where the first inequality is from data processing inequality.
From the definition of the mixture policy, $\pi^{\text{mix}}(a|s_t)\geq \epsilon \mu(a|s_t)$ for any $a\in\cA_t$. Then we have
\begin{equation*}
    \begin{split}
    \mathbb E_{s_t}\left[\mI_t(\pi^*)^{\top}\pi_t^{\mix}(\cdot|s_t)\right] &\geq \epsilon\mathbb E_{s_t, a\sim\mu(\cdot|s_t)}\left[\sum_{a'\in N^{\text{out}}(a)}\mI_t\left(a_t^*;Y_{t,a'}\right)\right]\\
     &\geq \epsilon\mathbb E_{s_t, a\sim\mu(\cdot|s_t)}\left[\sum_{a'\in N^{\text{out}}(a)}\mI_t\left(a_t^*;Y_{t,a'}\right)\ind\left\{ a'\in N^{\text{out}}(a)\right\}\right]\\
     &\geq \epsilon\mathbb E_{s_t, a\sim\mu(\cdot|s_t)}\left[\sum_{a'\in \cA_t}\mI_t\left(a_t^*;Y_{t,a'}\right)\ind\left\{a'\in N^{\text{out}}(a)\right\}\right]\,.
\end{split}
\end{equation*}
Then we have
\begin{equation}\label{eqn:lower_bound_ig}
     \begin{split}
          \mathbb E_{s_t}\left[\mI_t(\pi^*)^{\top}\pi_t^{\mix}(\cdot|s_t)\right] 
          &\geq \epsilon\mathbb E_{s_t, a\sim \mu(\cdot|s_t)}\left[\sum_{a'\in \cA_t}\mI_t(a_t^*; Y_{t,a'})\right]\min_{a'}\mathbb E_{s_t, a\sim \mu(\cdot|s_t)}\left[\ind\left\{a'\in N^{\text{out}}(a)\right\}\right]\\
          &\geq \epsilon\vartheta(\cG, \xi) \mathbb E_{s_t}\left[\sum_{a'\in \cA_t}\mI_t(a_t^*; Y_{t,a'})\right]\\
          &\geq \frac{2\epsilon\vartheta(\cG, \xi)}{R_{\max}^2+1}\mathbb E_{s_t}\left[\sum_{a\in\cA_t}\mathbb P_t(a_t^*=a)\sum_{a'\in\cA_t}\left(\mathbb E_t[Y_{t, a'}|a_t^*=a]-\mathbb E_t[Y_{t,a'}]\right)^2\right]\,,
     \end{split}
 \end{equation}
 where the last inequality is from Pinsker's inequality. 
 
 On the other hand, by the definition of mixture policy and Jenson's inequality,
 \begin{equation*}
 \begin{split}
     \mathbb E_{s_t}[ 
     \Delta_t(s_t)^{\top}\pi_t^{\mix}(\cdot|s_t)] &\leq (1-\epsilon)\mathbb E_{s_t, a\sim \pi_t^{\TS}(\cdot|s_t)}\mathbb E_t[Y_{t, a_t^*}-Y_{t,a}]+2\epsilon R_{\max} \\
     &\leq  \mathbb E_{s_t, a\sim \pi_t^{\TS}(\cdot|s_t)}\left[\mathbb E_t[Y_{t, a}|a_t^*=a]-\mathbb E_t[Y_{t,a}]\right]+2\epsilon R_{\max}\\
     &\leq\sqrt{\mathbb E_{s_t}\left[\sum_{ a\in\cA_t}\mathbb P_t(a_t^*=a)\left(\mathbb E_t[Y_{t, a}|a_t^*=a]-\mathbb E_t[Y_{t,a}]\right)^2\right]}+2\epsilon R_{\max}\,.
      \end{split}
 \end{equation*}
Combining with the lower bound of the information gain in Eq.~\eqref{eqn:lower_bound_ig}, 
\begin{equation*}
    \mathbb E_{s_t}[ 
     \Delta_t(s_t)^{\top}\pi_t^{\mix}(\cdot|s_t)] \leq  \sqrt{\frac{R_{\max}^2+1}{2\epsilon\vartheta(\cG, \xi)} \mathbb E_{s_t}\left[\mI_t(\pi^*)^{\top}\pi_t^{\mix}(\cdot|s_t)\right] }+2\epsilon R_{\max}\,.
\end{equation*}
By choosing 
\begin{equation*}
    \epsilon = \left(\frac{R_{\max}^2+1}{R_{\max}^2}\frac{1}{8\vartheta(\cG, \xi)}\mathbb E\left[\mI_t(\pi^*)^{\top}\pi_t^{\mix}(\cdot|s_t)\right] \right)^{1/3}\,,
\end{equation*}
we reach
\begin{equation*}
        \mathbb E_{s_t}[ 
     \Delta_t(s_t)^{\top}\pi_t^{\mix}(\cdot|s_t)] \leq 2R_{\max}\left(\frac{R_{\max}^2+1}{R_{\max}^2}\frac{1}{8\vartheta(\cG, \xi)}\mathbb E_{s_t}\left[\mI_t(\pi^*)^{\top}\pi_t^{\mix}(\cdot|s_t)\right] \right)^{1/3}\,.
\end{equation*}
This implies 
\begin{equation*}
    \frac{ \left(\mathbb E_{s_t}[ 
     \Delta_t(s_t)^{\top}\pi_t^{\mix}(\cdot|s_t)]\right)^3 }{\mathbb E_{s_t}\left[\mI_t(\pi^*)^{\top}\pi_t^{\mix}(\cdot|s_t)\right]}\leq \frac{R_{\max}^3+R_{\max}}{\vartheta(\cG, \xi)}\,.
\end{equation*}
This ends the proof.
 \end{proof}
\subsection{Proof of Lemma \ref{lemma:cumulative_ig}}\label{proof:cumulative_ig}
\begin{proof}
According the the definition of cumulative information gain,
\begin{equation}\label{eqn:cumulative_inf_gain}
    \mathbb E\left[\sum_{t=1}^n\mathbb E_{s_t}\left[\mI_t(\pi^*)^{\top}\pi_t(\cdot|s_t)\right]\right] = \mathbb E_{t,s_t}[\mI_t(\pi^*; O_t|A_t, s_t)] \,.
\end{equation}
Note that 
\begin{equation*}
    \mI_t(\pi^*;(s_t, A_t, O_t)) = \mathbb E_{t,s_t}[\mI_t(\pi^*; O_t|A_t, s_t)] + \mathbb E_{t,s_t}[\mI_t(\pi^*; A_t|s_t)] + \mI_t(\pi^*; s_t) =\mathbb E_{t,s_t}[\mI_t(\pi^*; O_t|A_t, s_t)] \,.
\end{equation*}
By the chain rule,
\begin{equation*}
\begin{split}
     \mI(\pi^*; \cF_{n+1}) &= \sum_{t=1}^n\mathbb E\left[\mI_{t}(\pi^*; (s_t, A_t, O_t))\right]= \sum_{t=1}^n\mathbb E\left[\mathbb E_{s_t}\left[\mI_t(\pi^*)^{\top}\pi_t(\cdot|s_t)\right]\right]\,.
\end{split}
\end{equation*}
On the other hand,
\begin{equation*}
     \mI(\pi^*; \cF_{n+1}) \leq \mathbb H(\pi^*)\leq M\log(k)\,,
\end{equation*}
where $\mathbb H(\cdot)$ is the entropy.
Putting the above together,
\begin{equation*}
     \mathbb E\left[\sum_{t=1}^n\mathbb E_{s_t}\left[\mI_t(\pi^*)^{\top}\pi_t(\cdot|s_t)\right]\right] \leq M\log(k).
\end{equation*}
This ends the proof.
\end{proof}
\begin{remark}
We analyze the upper bound $\mH(\pi^*)$ under the following examples. Denote the number of contexts by $M$, which could be infinite.
\begin{enumerate}
    \item The possible choices of $\pi^*$ is at most $k^M$, and thus $\mH(\pi^*)\leq \log(k^M) = M\log(k)$. When $M = \infty$, this bound is vacuous.
    \item 
    We can divide the context space into fixed and disjoint clusters such that for each parameter in the parameter space, the optimal policy maps all the contexts in a single cluster to a single action. For example, a naive partition is that each cluster contains a single context, and in this case, the number of clusters denoted equals the number of contexts $M$. 
    Let $P$ be the minimum number of such clusters, so $P \leq M$.
    The possible choice of $\pi^*$ is $k^{P}$, and thus $\mH(\pi^*)\leq \log(k^{P}) = {P}\log(k)$. When $M=\infty$ but ${P} < \infty$, we obtain a valid upper bound instead of $\infty$.
\end{enumerate}
\end{remark}
\subsection{Proof of Theorem \ref{thm:main_graph}} 
\begin{proof}
    According to Theorem \ref{thm:generic_bound_ids}, we have 
    \begin{equation*}
 \begin{split}
     & \BR(n; \pi^{\text{MIR}})\leq 2R_{\max}\sqrt{n}+\inf_{\lambda\geq 2}2^{1-2/\lambda}\cI_{2R_{\max}/\sqrt{n},\lambda}^{1/\lambda}n^{1-1/\lambda}\mathbb E\left[\sum_{t=1}^n\mathbb E\left[\mI_t(\pi^*)^{\top}\pi_t(\cdot|s_t)\right]\right]^{1/\lambda}\,.
 \end{split}
 \end{equation*}
 Using Lemma \ref{lemma:graph_ir_2} with $\epsilon=1/\sqrt{n}$, we have
 \begin{equation*}
      \cI_{2R_{\max}/\sqrt{n},2}\leq \frac{2R_{\max}^2+2}{1-1/\sqrt{n}}2\beta(\cG)\log\left(\frac{4k^2\sqrt{n}}{\beta(\cG)}\right)\leq 16(R_{\max}^2+1)\beta(\cG)\log\left(\frac{4k^2\sqrt{n}}{\beta(\cG)}\right)\,.
 \end{equation*}
 Combining Lemmas \ref{lemma:graph_ir_3}-\ref{lemma:cumulative_ig} together, we have 
 \begin{equation*}
     \begin{split}
        &\BR(n; \pi^{\text{MIR}})\leq  2R_{\max}\sqrt{n} +\\
        &\min\left(\left(16(R_{\max}^2+1)\beta(\cG)\log\left(\frac{4k^2\sqrt{n}}{\beta(\cG)}\right)nM\log(k)\right)^{1/2}, \left(\frac{R_{\max}^3+R_{\max}}{\vartheta(\cG, \xi)}2M\log(k)\right)^{\tfrac{1}{3}}n^{\tfrac{2}{3}}\right)\\
        &\geq CR_{\max}\min\left( \sqrt{\beta(\cG)\log\left(\frac{4k^2\sqrt{n}}{\beta(\cG)}\right)nM\log(k)}, \left(\frac{2M\log(k)}{\vartheta(\cG, \xi)}\right)^{\tfrac{1}{3}}n^{\tfrac{2}{3}}\right)\,,
     \end{split}
 \end{equation*}
 for some absolute constant $C>0$. This ends the proof.
\end{proof} 
 
\section{Sparse Linear Contextual Bandits}
\subsection{Proof of Theorem \ref{thm:lower_bound_sparse}}
\begin{proof}
    We assume there are two available contexts. Context set 1 consists of a single action $x_0=(0, \ldots, 0)^{\top}\in\mathbb R^{d+1}$ and an informative action set $\cH$ as follows:
\begin{equation}\label{eqn:action_set}
    \begin{split}
       \cH = \Big\{x\in\mathbb R^{d+1}\big|x_j\in\{-\kappa, \kappa\} \ \text{for} \ j\in[d], x_d = 1\Big\} \,,
    \end{split}
\end{equation}
where $0<\kappa\leq1$ is a constant. Let $d=sp$ for some integer $p\geq 2$. Context set 2 consists of a multi-task bandit action set $\cA = \{(\{e_i\in\mathbb R^p: i\in[p]\}^s, 0)\}\subset \mathbb R^{d+1}$ whose element's last coordinate is always 0. The arriving probability of each context is 1/2. One can verify for this feature set, the explorability constant $C_{\min}(\phi, \xi) = 1/2$ achieved by a policy that uniformly samples from $\cH$ when context set 1 arrives.

Fix a conditional IDS policy $\pi^{\text{CIR}}$. Let $\Delta>0$ and $\Theta=\{\Delta e_i:i\in[p]\}\subset \mathbb R^p$. Given $\theta\in\{(\Theta^s, 1)\}\subset \mathbb R^{d+1}$ and $i\in[s]$, let $\theta^{(i)}\in\mathbb R^p$ be defined by $\theta_k^{(i)}=\theta_{(i-1)s+k}$, which means that 
\begin{equation*}
    \theta^{\top} = [\theta^{(1)\top}, \ldots, \theta^{(s)\top}, 1]\,.
\end{equation*}
Assume the prior of $\theta^*$ is uniformly distributed over $\{(\Theta^s, 1)\}$. Define the cumulative regret of policy $\pi$ interacting with bandit $\theta$ as
\begin{equation}\label{eqn:sparse_lower_bound1}
\begin{split}
    R_{\theta}(n; \pi) &= \sum_{t=1}^n\mathbb E_{\theta}\left[\langle x_{s_t}^*, \theta\rangle-Y_t\right]\\
    &=\sum_{t=1}^n\mathbb E_{\theta}\left[\left(\langle x_{s_t}^*, \theta\rangle-Y_t\right)\ind(s_t=1)\right]+\sum_{t=1}^n\mathbb E_{\theta}\left[\left(\langle x_{s_t}^*, \theta\rangle-Y_t\right)\ind(s_t=2)\right]\,,
    \end{split}
\end{equation}
where we write $x_{s_t}^*=\phi(s_t, a_t^*)$ for short. Therefore,
\begin{equation*}
\begin{split}
     \BR(n; \pi)= \frac{1}{|\Theta|^s}\sum_{\theta\in\{(\Theta^s, 1)\}}R_{\theta}(n;\pi) \,.
\end{split}
\end{equation*}

Note that when context set 1 arrives, the action from $\cH$ suffers at least $1-s\Delta$ regret and thus since $x_{0}$ is always the optimal action for context set 1. 
From the definition of conditional IDS in Eq.~\eqref{def:conditional_IDS}, when context set 1 is arriving and $s\Delta<1$, conditional IDS will always pull $x_{0}$ for this prior. That means conditional IDS will suffer no regret for context set 1 and implies for any $\theta\in \{(\Theta^s, 1)\}$,
\begin{equation*}
\begin{split}
     &\sum_{t=1}^n\mathbb E_{\theta}\left[\left(\langle x_{s_t}^*, \theta\rangle-Y_t\right)\mathbb I(s_t=1)\right]
     =\sum_{t=1}^n\mathbb E_{\theta}\left[\langle x_{1}^*, \theta\rangle-\langle \pi^{\text{CIR}}(i|1), \theta\rangle|s_t=1\right]\mathbb P(s_t=1)=0\,.
\end{split}
\end{equation*}

It remains to bound the second term in Eq.~\eqref{eqn:sparse_lower_bound1}. It essentially follows the proof of Theorem 24.3 in \citet{lattimore2020bandit}. 
From the proof of multi-task bandit lower bound, we have 
\begin{equation*}
    \frac{1}{|\Theta|^s}\sum_{\theta\in\{(\Theta^s, 1)\}} \sum_{t=1}^n\mathbb E_{\theta}\left[\left(\langle x_{s_t}^*, \theta\rangle-Y_t\right)\ind(s_t=2)\right] \geq \frac{1}{16}\sqrt{dsn}\,.
\end{equation*}
This ends the proof.
\end{proof}

\subsection{Proof of Lemma \ref{lemma:information_ratio_sparse}}
\begin{proof}
If one can derive a worst-case bound of $ \Psi_{t,\alpha}^{\lambda}(\tilde{\pi})$ for a particular policy $\tilde{\pi}$,, we can have an upper bound for $\cI_{\alpha,\lambda}$ automatically. The remaining step is to choose a proper policy $\tilde{\pi}$ for $\lambda=2,3$ separately.

First, we bound the information ratio with $\lambda=2$. By the definition of mutual information, for any $a\in\cA_t$, we have 
\begin{equation}\label{eqn:IG_pinsker}
    \begin{split}
      \mI_t(a_t^*; Y_{t,a})
      =\sum_{a'\in\cA_t}\mathbb P_t(a_t^*=a')D_{\KL}\left(\mathbb P_t(Y_{t,a}=\cdot|a_t^*=a')||\mathbb P_t(Y_{t,a} = \cdot)\right).
    \end{split}
\end{equation}
Recall that $R_{\max}$ is the upper bound of maximum expected reward. It is easy to see $Y_{t,a}$ is a $\sqrt{R_{\max}^2+1}$ sub-Gaussian random variable. According to Lemma 3 in \citet{russo2014learning}, we have 
\begin{equation}\label{eqn:bound_information_gain}
    \mI_t(a_t^*;Y_{t,a})\geq \frac{2}{R_{\max}^2+1}\sum_{a'\in\cA_t}\mathbb P_t(a_t^*=a')\Big(\mathbb E_t[Y_{t,a}|a_t^*=a']-\mathbb E_t[Y_{t,a}]\Big)^2.
\end{equation}
The MIR of contextual IDS can be bounded by the MIR of Thompson sampling:
\begin{equation*}
    \cI_{0,2}\leq \max_{t\in[n]}\frac{ \max\left(0, \mathbb E[\Delta_t(s_t)^{\top}\pi_t^{\TS}(\cdot|s_t)]\right)^{2}}{\mathbb E[\mI_t(\pi^*)^{\top}\pi_t^{\TS}(\cdot|s_t)]}\leq \max_{t\in[n]}\frac{ \max\left(0, \mathbb E[\Delta_t(s_t)^{\top}\pi_t^{\TS}(\cdot|s_t)]\right)^{2}}{\mathbb E[\mI_t(a_t^*)^{\top}\pi_t^{\TS}(\cdot|s_t)]}\leq \max_{t\in[n]}\mathbb E\left[\frac{ (\Delta_t(s_t)^{\top}\pi_t^{\TS}(\cdot|s_t))^{2}}{\mI_t(a_t^*)^{\top}\pi_t^{\TS}(\cdot|s_t)}\right] ,
\end{equation*}
where the first inequality is from data processing inequality and the second inequality is from Jenson's inequality. Using the matrix trace rank trick described in Proposition 5 in \citet{russo2014learning}, we have $ \cI_{0,2}\leq (R_{\max}^2+1)d/2$ in the end.

Second, we bound the information ratio with $\lambda=3$. Let's define an exploratory policy $\mu$ such that
\begin{equation}\label{def:explortory}
    \mu =  \argmax_{\pi:\cS\to \cP(\cA)} \sigma_{\min}\left( \mathbb E_{s\sim \xi}\Big[\mathbb E_{a\sim \pi(\cdot|s)}[\phi(s, a)\phi(s, a)^{\top}]\Big]\right)\,.
\end{equation}
Consider a mixture policy $\pi_t^{\mix} = (1-\epsilon)\pi_t^{\text{TS}}+\epsilon \mu$ where the mixture rate $\epsilon\geq 0$ will be decided later. 

\paragraph{Step 1: Bound the information gain}
According the lower bound of information gain in Eq.~\eqref{eqn:bound_information_gain},
\begin{equation*}
\begin{split}
     &\mathbb E_{s_t}\left[\mI_t(a_t^*)^{\top}\pi_t^{\mix}(\cdot|s_t)\right]\\
     \geq& \frac{2}{R_{\max}^2+1}\mathbb E_{s_t\sim\xi, a\sim \pi_t^{\mix}(\cdot|s_t)}\left[\sum_{a'\in\cA_t}\mathbb P_t(a_t^*=a')\left(\mathbb E_t[Y_{t,a}|a_t^*=a']-\mathbb E_t[Y_{t,a}]\right)^2\right]\\
     =& \frac{2}{R_{\max}^2+1}\mathbb E_{s_t\sim\xi, a\sim \pi_t^{\mix}(\cdot|s_t)}\left[\sum_{a'\in\cA_t}\mathbb P_t(a_t^*=a')\left(\phi(s_t, a)^{\top}\mathbb E_t[\theta^*|a_t^*=a']-\phi(s_t, a)^{\top}\mathbb E_t[\theta^*]\right)^2\right]\,.
\end{split}
\end{equation*}
By the definition of the mixture policy, we know that $\pi_t^{\mix}(a|s_t)\geq \epsilon \mu(a|s_t)$ for any $a\in\cA_t$. Then we have 
\begin{equation*}
\begin{split}
    &\mathbb E_{s_t}\left[\mI_t(a_t^*)^{\top}\pi_t^{\mix}(\cdot|s_t)\right]\geq \frac{2\epsilon}{R_{\max}^2+1}\sum_{a'\in\cA_t}\mathbb P_t(a_t^*=a')\\
     &\cdot\mathbb E_{s_t\sim\xi, a\sim \mu(\cdot|s_t)}\Big[(\mathbb E_t[\theta^*|a_t^*=a']-\mathbb E_t[\theta^*])^{\top}\phi(s_t, a)\phi(s_t, a)^{\top}(\mathbb E_t[\theta^*|a_t^*=a']-\mathbb E_t[\theta^*])\Big]\,.
\end{split}
\end{equation*}
From the definition of $\mu$ in Eq.~\eqref{def:explortory}, we have
\begin{equation*}
    \mathbb E_{s_t}\left[\mI_t(a_t^*)^{\top}\pi_t^{\mix}(\cdot|s_t)\right]\geq \frac{2\epsilon}{R_{\max}^2+1}\sum_{a'\in\cA_t}\mathbb P_t(a_t^*=a')C_{\min}(\phi, \xi)\big\|\mathbb E_t[\theta^*|a_t^*=a']-\mathbb E_t[\theta^*]\big\|_2^2\,.
\end{equation*}

\paragraph{Step 2: Bound the instant regret} We decompose the regret by the contribution from the exploratory policy and the one from TS:
\begin{equation}\label{eqn:bound1}
\begin{split}
      \mathbb E_{s_t}\left[\Delta_t(s_t)^{\top}\pi_t^{\mix}(\cdot|s_t)\right]
      &= (1-\epsilon)\mathbb E_{s_t}\left[\sum_a \mathbb P_t(a_t^*=a)\Big(\mathbb E_t\left[\phi(s_t, a)^{\top} \theta^*\big|a_t^*=a\right]- \mathbb E_t\left[\phi(s_t, a)^{\top}\theta^* \right]\Big)\right]\\
      &+ \epsilon \mathbb E_{s_t}\left[\sum_a\mathbb E_t\Big[\phi(s_t,a_t^*)^{\top}\theta^* -\phi(s_t,a)^{\top}\theta^*\Big]\mu(a|s_t)\right]\,.
\end{split}
\end{equation}
Since $R_{\max}$ is the upper bound of maximum expected reward, the second term can be bounded $2R_{\max}\epsilon$. Using Jenson's inequality, the first term can be bounded by
\begin{equation*}
    \begin{split}
        &\mathbb E_{s_t}\left[\sum_a \mathbb P_t(a_t^*=a)\Big(\mathbb E_t\left[\phi(s_t, a)^{\top} \theta^*\big|a_t^*=a\right]- \mathbb E_t\left[\phi(s_t, a)^{\top}\theta^* \right]\Big)\right] \\
        &\leq \sqrt{\mathbb E_{s_t}\left[\sum_a \mathbb P_t(a_t^*=a)\Big(\mathbb E_t\left[\phi(s_t, a)^{\top} \theta^*\big|a_t^*=a\right]- \mathbb E_t\left[\phi(s_t, a)^{\top}\theta^* \right]\Big)^2\right]}\,.
    \end{split}
\end{equation*}
Since all the optimal actions are sparse, any action $a$ with $\mathbb P_t(a_t^*=a)>0$ must be sparse. Then we have 
\begin{equation*}
    \Big(\phi(s_t,a)^{\top}(\mathbb E_t[\theta^*|a_t^*=a]-\mathbb E_t[\theta^*])\Big)^2\leq s^2 \Big\|\mathbb E_t[\theta^*|a_t^*=a]-\mathbb E_t[\theta^*]\Big\|_2^2\,,
\end{equation*}
for any action $a$ with $\mathbb P_t(a_t^*=a)>0$. This further implies
\begin{equation}\label{eqn:bound2}
\begin{split}
     &\mathbb E_{s_t}\left[\sum_a \mathbb P_t(a_t^*=a)\Big(\mathbb E_t\left[\phi(s_t, a)^{\top} \theta^*\big|a_t^*=a\right]- \mathbb E_t\left[\phi(s_t, a)^{\top}\theta^* \right]\Big)\right]\\
     &\leq \sqrt{\mathbb E_{s_t}\left[\sum_a \mathbb P_t(a_t^*=a)s^2\Big\|\mathbb E_t[\theta^*|a_t^*=a]-\mathbb E_t[\theta^*]\Big\|_2^2\right]}\\
     &=\sqrt{\frac{s^2(R_{\max}^2+1)}{2\epsilon C_{\min}(\phi,\xi)}\frac{2\epsilon C_{\min}(\phi,\xi)}{(R_{\max}^2+1)}\mathbb E_{s_t}\left[\sum_{a\in\cA_t}\mathbb P_t(a_t^*=a)s^2\Big\|\mathbb E_t[\theta^*|a_t^*=a]-\mathbb E_t[\theta^*]\Big\|_2^2\right]}\\
     &\leq \sqrt{\frac{s^2(R_{\max}^2+1)}{2\epsilon C_{\min}(\phi,\xi)} \mathbb E\left[\mI_t(a_t^*)^{\top}\pi_t^{\mix}(\cdot|s_t)\right]}\,.
    \end{split}
\end{equation}
Putting Eq.~\eqref{eqn:bound1} and \eqref{eqn:bound2} together, we have
\begin{equation*}
   \mathbb E\left[\Delta_t(s_t)^{\top}\pi_t^{\mix}(\cdot|s_t)\right]\leq \sqrt{\frac{s^2(R_{\max}^2+1)}{2\epsilon C_{\min}(\phi,\xi)} \mathbb E\left[\mI_t(a_t^*)^{\top}\pi_t^{\mix}(\cdot|s_t)\right]} + 2R_{\max}\epsilon\,.
\end{equation*}
By optimizing the mixture rate $\epsilon$, we have 
\begin{equation*}
   \cI_{0,3}\leq \frac{\left(\mathbb E_{s_t}\left[\Delta_t(s_t)^{\top}\pi_t^{\mix}(\cdot|s_t)\right]\right)^3}{\mathbb E_{s_t}\left[\mI_t(\pi^*)^{\top}\pi_t^{\mix}(\cdot|s_t)\right]}\leq  \frac{\left(\mathbb E_{s_t}\left[\Delta_t(s_t)^{\top}\pi_t^{\mix}(\cdot|s_t)\right]\right)^3}{\mathbb E_{s_t}\left[\mI_t(a_t^*)^{\top}\pi_t^{\mix}(\cdot|s_t)\right]}\leq \frac{s^2(R_{\max}^2+1)}{8R^2_{\max}C_{\min}}\leq \frac{s^2}{4C_{\min}(\phi,\xi)}\,.
\end{equation*}
This ends the proof.
\end{proof}


\end{document}